\begin{document}

\title{A Study of Gradient Descent Schemes \\for General-Sum Stochastic Games}
\date{}

\author{H. L. Prasad\thanks{hlprasu@csa.iisc.ernet.in}}
\author{Shalabh Bhatnagar\thanks{shalabh@csa.iisc.ernet.in}}
\affil{\small Department of Computer Science and Automation, Indian Institute of Science, INDIA}
\date{}
\renewcommand\Authand{~ and }

\maketitle

\begin{abstract}
Zero-sum stochastic games are easy to solve as they can be cast as simple Markov decision processes. This is however not the case with general-sum stochastic games. A fairly general optimization problem formulation is available for general-sum stochastic games by \cite{FilarVrieze}. However, the optimization problem there has a non-linear objective and non-linear constraints with special structure. Since gradients of both the objective as well as constraints of this optimization problem are well defined, gradient based schemes seem to be a natural choice. We discuss a gradient scheme tuned for two-player stochastic games. We show in simulations that this scheme indeed converges to a Nash equilibrium, for a simple terrain exploration problem modelled as a general-sum stochastic game. However, it turns out that only global minima of the optimization problem correspond to Nash equilibria of the underlying general-sum stochastic game, while gradient schemes only guarantee convergence to local minima. We then provide important necessary conditions for gradient schemes to converge to Nash equilibria in general-sum stochastic games.
\end{abstract}

\keywords{Game theory, Nonlinear programming, Non-convex constrained problems, Discounted cost criteria, General-sum stochastic games, Nash equilibrium.}

\section{Introduction}
Game theory is seen as a useful means to handle multi-agent scenarios. Since the seminal work of \cite{shapley}, stochastic games have been an important class of models for multi-agent systems. A comprehensive treatment of stochastic games under various payoff criteria is given by \cite{FilarVrieze}. Many interesting problems like fishery games, advertisement games, etc., can be modelled as stochastic games, see \cite{FilarVrieze}. One of the significant  results is that every stochastic game has a Nash equilibrium and it can be characterised in terms of global minima of a suitable mathematical programming problem. 

As an application of general-sum games to the multi-agent scenario, \cite{ssingh} observed that in a two-agent iterated general-sum game, Nash convergence is assured either in strategies or in the very least in average payoffs. Later by \cite{huwellman}, stochastic game theory was observed to be a better framework for multi-agent scenarios as it could be viewed as an extension of the well studied Markov decision theory (see \cite{bertsekas}). However, in the stochastic game setting, general-sum games are difficult to solve as, unlike zero-sum games, they cannot be cast in the framework similar to Markov decision processes. \cite{huwellman} proposed an interesting Q-learning algorithm, that is based on reinforcement learning (see \cite{berttsit}). However, their algorithm assures convergence only if the game has exactly one Nash equilibrium. %Additionally, some flaws in their proof of convergence were pointed out \cite{bowling} requiring some more assumptions which restricted the applicability of their algorithm further. 
An extension to the above algorithm called NashQ was proposed by \cite{huwellman2003} which showed improvement in performance. However, convergence in a scenario with multiple Nash equilibria was not addressed. Another noteworthy work is of \cite{littman} who proposed a friend-or-foe Q-learning (FFQ) algorithm as an improvement over the NashQ algorithm with assured convergence, though not necessarily to a Nash equilibrium. Moreover, the FFQ algorithm is applicable to a restricted class of games where either full co-operation between agents is ensured or the game is zero-sum. Algorithms for some specific cases of stochastic games such as Additive Reward and Additive Transition (AR-AT) games are discussed by \cite{FilarVrieze} as well as \cite{breton1986computation}. 

A new type of approach based on homotopy is proposed by \cite{herings2004stationary}. In this approach, a homotopic path between equilibrium points of $N$ independent MDPs and the $N$-player stochastic game in question, is traced numerically giving a Nash equilibrium point of the stochastic game of interest. Their result applies to both normal form as well as extensive form games. However, this approach has a complexity similar to that of typical gradient descent schemes discussed in this paper. For more recent developments in this direction, see the work by \cite{herings2006homotopy} and \cite{borkovsky2010user}.

A recent approach for the computation of Nash equilibria is given by \cite{akchurina} in which a reinforcement learning type scheme is proposed. Though their experiments do show convergence in a large group of randomly generated games, a formal proof of convergence has not been provided.

For general-sum stochastic games, \cite[Section 4.3]{breton1986computation} provides an interesting optimization problem with non-linear objectives and linear constraints whose global minima correspond to Nash equilibria of the underlying general-sum stochastic game. However, since the objective is not guaranteed to be convex, simple gradient descent techniques might not converge to a global minimum. \cite{mac2009solving} formulate intermediate optimization problems, called Multi-Objective Linear Programs (MOLPs), to compute Nash equilibria as well as Pareto optimal solutions. However, as mentioned in that paper, the complexity of their algorithm scales exponentially with the problem size. Thus, their algorithm is tractable only for small sized problems with a few tens of states.

Another non-linear optimization problem for computing Nash equilibria in general-sum stochastic games has been given by \cite{FilarVrieze}. We begin with this optimization problem by discussing it in Section~\ref{sect:opt}. Gradient based techniques are quite common for solving optimization problems. In the optimization problem, gradients of (both) the objective and all constraints w.r.t. the value vector $\v$ and strategy vector $\pi$, are well defined. A possible solution approach is to apply simple gradient based techniques to solve these optimization problems. We look at a possible gradient descent scheme in Section \ref{sect:grad-tech:scheme}. In the process of construction of this scheme, several initial hurdles for gradient descent schemes are discussed and addressed. We then consider an example problem of terrain exploration, modelled as a general-sum stochastic game, in Section \ref{sect:grad-tech:terrain}. In the same section, we show via simulation that the gradient descent scheme of Section \ref{sect:grad-tech:scheme} does indeed give a Nash equilibrium solution to the terrain exploration problem. In our case, the optimization problem at hand has only global minima correspond to Nash equilibria of the underlying general-sum stochastic game as discussed later in Section~\ref{subsect:theoretical}. It is well known that gradient descent schemes can only guarantee convergence to local minima. But, in the optimization problems that we consider, global minima are desired. So, a question remains: Are simple gradient descent schemes good enough to give Nash equilibria in the aforementioned optimization problems? In other words, are there only global minimum points in these optimization problems, so that simple gradient descent schemes can easily work? We address this issue in Section \ref{sect:grad-tech:fallacies}. Finally, in Section~\ref{sect:conclusion} we provide the concluding remarks.

\section{The Optimization Problem}\label{sect:opt}

The framework of a general-sum stochastic game is described in Section \ref{subsect:stochasticgame}. A basic idea of the optimization problem is given in Section \ref{sect:opt:basic-idea}. The full optimization problem is then formulated in Section \ref{subsect:opf} for the infinite horizon discounted reward setting. Some important results by \cite{FilarVrieze} that are applicable here are then described.

\subsection{Stochastic Games}\label{subsect:stochasticgame}

A two-agent scenario is considered in the following formulation. One can in general consider an $N$-agent scenario for $N \geq 2$. We assume $N = 2$ only for notational simplicity. We interchangeably use the terms `agent' and `player' to mean the same entity in the description below.
We assume that the stochastic game terminates in a finite but random time. Hence, a discounted value framework in dynamic programming has been chosen for the optimization problem.

A stochastic game is described via a tuple $< \S , \A, p, r >$.
The quantities in the tuple are explained through the description below.\\
\begin{inparaenum}[(i)]
\item $\S$ denotes the state space.\\
\item $\A^i(x)$ denotes the action space for the \ith{i} agent, $i=1,2$. $\A(x) = \mathop\times\limits_{i = 1}^{2} \A^i(x)$, the Cartesian product, is the aggregate action space consisting of all possible actions of both agents when the state of the game is $x \in \S$.\\
\item $p(y|x,a)$ denotes the probability of going from the state $x \in \S$ at the current instant to $y \in \S$ at the immediate next instant when the action $a \in \A(x)$ is chosen.\\
\item Finally, $r(x,a)$ denotes the vector of reward functions of both agents when the state is $x \in \S$ and the vector of actions $a \in \A(x)$ is chosen.\\
\end{inparaenum}

For an infinite horizon discounted reward setting, a discount factor $0 < \beta < 1$ is also included in the tuple describing the game. As is clear from this definition, a stochastic game can be viewed as an extension of the single-agent Markov decision process.

A \textit{strategy} $\pi^i \defn \left < \pi^i_1, \pi^i_2, \dots, \pi^i_t, \dots \right >$ of the \ith{i} player in a stochastic game prescribes the action to be performed in each state at each time instant $t$, by that player. We denote by $\pi^i_t(\cdot)$ the action prescribed for the \ith{i} agent by the strategy $\pi^i_t$ at time instant $t$. The quantity `$\cdot$' in $\pi_t^i(\cdot)$, in general, corresponds to the entire history of states and actions of all agents up to the \ith[st]{(t-1)} instant and the current system state at the \ith{t} instant. Let the set of all possible strategies for the \ith{i} player be denoted by $\mathcal{F}^i$. A strategy $\pi^i$ of player $i$, is said to be a Markov strategy if $\pi_t^i$ depends only on the current state $x_t \in \S$ at time $t$. %, and may in general vary with $t$ (i.e., could prescribe a different action, for the same state, at different times). 
Thus, for a Markov strategy $\pi^i$ of player $i$, $\pi^i_t(x) \in \A^i(x), \forall t \geq 0, x \in \S, i = 1, 2$. If the action chosen in any state for a Markov strategy $\pi^i$ is independent of the time instant $t$, viz., $\pi_t^i \equiv \bar\pi^i, \forall t \geq 0, i = 1, 2$, for some $\bar\pi$ such that $\bar\pi^i(x) \in \A^i(x), \forall x \in \S$, then the strategy is said be \textit{stationary}. Henceforth, we shall restrict our attention to stationary strategies only. By abuse of notation, we denote by $\pi$ itself the stationary strategy. Extending this to all players, we denote a Markov strategy-tuple by $\pi \defn \left < \pi^1, \pi^2 \right >$

Let $\Delta(\A(x))$ (resp. $\Delta(\A^i(x))$) denote the set of all probability measures on $\A(x)$ (resp. $\A^i(x)$). A randomized Markov strategy is specified via the sequence of maps $\phi_t^i: \S \rightarrow \Delta(\A^i(x))$, $x \in \S, t \geq 0, i = 1, 2$. Thus, $\phi^i_t(x)$ is a distribution on the set of actions $\A^i(x)$ and in general depends on time instant $t$. We say that $\phi_t^i$ is a \textit{stationary randomized strategy} or simply a \textit{randomized strategy} for player $i$ if $\phi_t^i \equiv \phi^i$. By an abuse of notation, we denote by $\pi = \left < \pi^1, \pi^2 \right >$, a stationary randomized strategy-tuple that we also (many times) call a strategy, since from now on, we shall only work with randomized strategies. We use $\pi^i(x, a)$ to denote the probability of picking action $a \in \A^i(x)$ in state $x \in \S$ by agent $i$. \cite[Theorem 3.8.1, pp. 130]{FilarVrieze} states that a Nash equilibrium in stationary randomized strategies exists for general-sum discounted stochastic games. We will refer to such stationary randomized strategies as {\em Nash strategies}. Similar to MDPs \citep{bertsekas}, one can define the value function as follows:
\begin{equation}
\label{eq:games:sg-value}
v^i_\pi(x_0) = E \left [\sum\limits_{t} \beta^t \sum_{a \in \A(x)} \left ( r^i(x_t, a) \prod_{i = 1}^2 \pi^i(x_t, a) \right ) \right ], \forall i = 1, 2.
\end{equation}
Let $\pi^{-i}$ represent the strategy of the agent other than the \ith{i} agent, that is, $\pi^{-1} = \pi^2$ and $\pi^{-2} = \pi^1$ respectively. Formally, we define Nash strategies and Nash equilibrium below.
\begin{definition}[Nash Equilibrium]
A stationary Markov strategy $\pi^* = \left < \pi^{1*}, \pi^{2*}\right >$ is said to be Nash if
\[v^i_{\pi^*}(x) \ge v^i_{\left <\pi^i, \pi^{-i*} \right>}(x), \forall \pi^i, i = 1, 2, \forall x \in \S.\] 
The corresponding equilibrium of the game is said to be a Nash equilibrium.
\end{definition}
Like in normal-form games \citep{nash1950equilibrium}, pure strategy Nash equilibria may not exist in the case of stochastic games. Using dynamic programming, the Nash equilibrium condition can be written as:
\begin{equation}
\label{eq:sg:dp}
v^i(x) = \max\limits_{\pi^i(x) \in \Delta(\A^i(x))} \left \{ E_{\pi(x)} \left [r^i (x, a) + \beta \sum \limits_{y \in U(x)} p(y|x, a) v^i(y) \right ] \right \}, \forall i = 1, 2.
\end{equation}
Unlike MDPs, \eqref{eq:sg:dp} involves two maximization equations. Note that the two equations are coupled because the reward of one agent is influenced by the strategy of the other agent and so is the state transition.

\subsection{The Basic Formulation}
\label{sect:opt:basic-idea}
%Let $\pi^{-i} = \left < \pi^1, \pi^2, \dots, \pi^{i - 1}, \pi^{i + 1}, \dots, \pi^N \right >$ represent strategies of all players except player $i$.
The dynamic programming equation \eqref{eq:sg:dp} for finding optimal values can now be revised to:
\begin{equation}
\label{eq:opt:basic-dp}
v^i(x) = \max\limits_{\pi^i(x) \in \Delta(\A^i(x))} \left \{ E_{\pi^i(x)} Q^i(x, a^i) \right \},\forall x \in \S, \forall i = 1, 2,
\end{equation}
where 
\[Q^i(x, a^i) = E_{\pi^{-i}(x)} \left [r^i (x, a) + \beta \sum \limits_{y \in U(x)} p(y|x, a) v^i(y) \right ],\]
represents the marginal value associated with picking action $a^i \in \A^i(x)$, in state $x \in \S$ for agent $i$. Also, $\Delta(\A^i(x))$ denotes the set of all possible probability distributions over $\A^i(x)$.
We derive a possible optimization problem from \eqref{eq:opt:basic-dp} in Section~\ref{sect:opt:basic-idea:objective} followed by a discussion of possible constraints on the feasible solutions in Section~\ref{sect:opt:basic-idea:constraints}.

\subsubsection{The objective}
\label{sect:opt:basic-idea:objective}
Equation \eqref{eq:opt:basic-dp} says that $v^i(x)$ represents the maximum value of $E_{\pi^i} Q^i(x, a^i)$ over all possible convex combinations of policy of agent $i$, $\pi^i \in \Delta(\A^i(x))$. However, neither the optimal value $v^i(x)$ nor the optimal policy $\pi^i$ are known {\em apriori}. So, a possible optimization objective would be 
\[f^i(\v^i, \pi^i) = \sum\limits_{x \in \S} \left ( v^i(x) - E_{\pi^i} Q^i(x, a^i) \right ),\]
which will have to be minimized over all possible policies $\pi^i \in \Delta(\A^i(x))$. But $Q^i(x, a^i)$, by definition, is dependent on strategies of all other agents. So, an isolated minimization of $f^i(\v^i, \pi^i)$ would really not make sense. Rather we need to consider the aggregate objective,
\[f(\v, \pi) = \sum\limits_{i = 1}^2 f^i(\v^i, \pi^i),\]
which is minimized over all possible policies $\pi^i \in \Delta(\A^i(x)), i = 1, 2.$ Thus, we have an optimization problem with objective as $f(\v, \pi)$ along with natural constraints ensuring that the policy vectors $\pi^i(x)$ remain as probabilities over all possible actions $\A^i(x)$ for all states $x \in \S$ for both agents. Formally, we write this optimization problem as below:
\begin{equation}
\label{eq:opt:basic-idea:first-cut-opt}
\left .
\begin{array}{l}
\min \limits_{\v,\pi} f(\v, \pi) = \sum \limits_{i = 1}^{2} \sum\limits_{x \in \S} \left (v^i(x) - E_{\pi^i} Q^i(x, a^i) \right ) \text{s.t.} \\
\subequationitem\label{subeq:opt:basic-idea:first-cut-opt:pi-ge-0} \pi^i(x, a^i) \ge 0, \forall a^i \in \A^i(x), x \in \S, i = 1, 2,\\
\subequationitem\label{subeq:opt:basic-idea:first-cut-opt:pi-sum-1} \sum \limits_{i = 1}^2 \pi^i(x, a^i) = 1, \forall x \in \S, i = 1, 2.
\end{array} \right \}
\end{equation}

Intuitively, all those $(\v, \pi)$ pairs which make $f(\v,\pi)$ as zero with $\pi$ satisfying \eqref{subeq:opt:basic-idea:first-cut-opt:pi-ge-0}-\eqref{subeq:opt:basic-idea:first-cut-opt:pi-sum-1}, should correspond to Nash equilibria of the corresponding general-sum discounted stochastic game. The question is: Is this true? We address this question in two parts. First, if $\pi^*$ represents a Nash strategy-tuple with $\v^*$ as the corresponding dynamic programming value obtained from \eqref{eq:sg:dp}, then we answer the question whether $f(\v^*,\pi^*)$ is zero? Second, if $(\v^*, \pi^*)$ is such that \eqref{subeq:opt:basic-idea:first-cut-opt:pi-ge-0}-\eqref{subeq:opt:basic-idea:first-cut-opt:pi-sum-1} are satisfied and $f(\v^*, \pi^*)$ is zero, then whether $\pi^*$ is a Nash strategy-tuple? We address these two questions in Lemmas \ref{lemma:opt:basic-idea:necessary} and \ref{lemma:opt:basic-idea:not-sufficient}.

\begin{lemma}
\label{lemma:opt:basic-idea:necessary}
Let $(\v^*, \pi^*)$ represent a possible solution for the dynamic programming equation \eqref{eq:sg:dp}. Then, $(\v^*, \pi^*)$ is a feasible solution of the optimization problem \eqref{eq:opt:basic-idea:first-cut-opt} and $f(\v^*, \pi^*) = 0$.
\end{lemma}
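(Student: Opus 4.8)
The plan is to establish the two assertions separately: first that $(\v^*, \pi^*)$ is feasible, and then that the objective vanishes at this point.

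\textbf{Feasibility.} By definition, a solution $(\v^*, \pi^*)$ of the dynamic programming equation \eqref{eq:sg:dp} carries a stationary randomized strategy-tuple $\pi^* = \langle \pi^{1*}, \pi^{2*} \rangle$ in which each $\pi^{i*}(x)$ lies in the simplex $\Delta(\A^i(x))$. Being a probability distribution over $\A^i(x)$, each $\pi^{i*}(x)$ automatically satisfies $\pi^{i*}(x, a^i) \ge 0$ for all $a^i \in \A^i(x)$ and $\sum_{a^i \in \A^i(x)} \pi^{i*}(x, a^i) = 1$, which are exactly constraints \eqref{subeq:opt:basic-idea:first-cut-opt:pi-ge-0}--\eqref{subeq:opt:basic-idea:first-cut-opt:pi-sum-1}. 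Hence $(\v^*, \pi^*)$ is feasible for \eqref{eq:opt:basic-idea:first-cut-opt}, and nothing further is required here.

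\textbf{The objective is zero.} Since $f$ is a finite sum of terms indexed by $i \in \{1,2\}$ and $x \in \S$, it suffices to show that each summand $v^{i*}(x) - E_{\pi^{i*}} Q^i(x, a^i)$ equals zero, where $Q^i$ is evaluated at the fixed opponent strategy $\pi^{-i*}$ and the values $\v^{i*}$. I would read this equality off directly from the Bellman optimality recorded in \eqref{eq:sg:dp}. First I would factor the joint expectation as $E_{\pi(x)}[\,\cdot\,] = E_{\pi^i(x)}\, E_{\pi^{-i}(x)}[\,\cdot\,]$, using the independence of the two players' randomizations in state $x$; by the definition of $Q^i$ this rewrites the right-hand side of \eqref{eq:sg:dp} precisely as $\max_{\pi^i(x)} E_{\pi^i(x)} Q^i(x, a^i)$, which is exactly the reformulated equation \eqref{eq:opt:basic-dp}.

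The key step is then to invoke that $\pi^{i*}$ is not merely feasible but is a \emph{maximizer} in \eqref{eq:opt:basic-dp}: because $(\v^*, \pi^*)$ is a solution of the dynamic programming equation, the strategy $\pi^{i*}(x)$ attains the maximum that defines $v^{i*}(x)$. Consequently the always-valid inequality $v^{i*}(x) \ge E_{\pi^{i*}} Q^i(x, a^i)$ is in fact an equality, giving $v^{i*}(x) = E_{\pi^{i*}} Q^i(x, a^i)$ for every $x \in \S$ and $i = 1, 2$. Each summand of $f$ therefore vanishes, and summing over the finitely many states and the two players yields $f(\v^*, \pi^*) = 0$. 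The only place where genuine care is needed is the interpretation of ``solution of the dynamic programming equation'': one must argue that such a solution supplies a maximizing strategy $\pi^{i*}$, so that the equality in \eqref{eq:opt:basic-dp} holds at $\pi^{i*}$. Everything else is bookkeeping --- the decomposition of the joint expectation into the $Q^i$ form and the finiteness of $\S$ ensuring the sum is well defined.
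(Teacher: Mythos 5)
Your proof is correct and is essentially the paper's own argument spelled out in detail: the paper simply states that the result ``follows from the construction of the optimization problem~\eqref{eq:opt:basic-idea:first-cut-opt},'' and your expansion --- feasibility from $\pi^{i*}(x) \in \Delta(\A^i(x))$, and vanishing of each summand because $\pi^{i*}(x)$ attains the maximum in \eqref{eq:opt:basic-dp} --- is exactly the intended reasoning.
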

\begin{proof}
Proof follows simply from the construction of the optimization problem~\eqref{eq:opt:basic-idea:first-cut-opt}.
\end{proof}

\begin{lemma}
\label{lemma:opt:basic-idea:not-sufficient}
Let $(\v^*, \pi^*)$ be a feasible solution of the optimization problem~\eqref{eq:opt:basic-idea:first-cut-opt} such that $f(\v^*, \pi^*) = 0$. Then, $\pi^*$ need not be Nash strategy-tuple and $\v^*$ need not correspond to the dynamic programming value obtained from \eqref{eq:sg:dp}.
\end{lemma}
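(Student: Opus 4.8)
The plan is to establish the lemma by exhibiting a counterexample, exploiting the observation that the optimization problem~\eqref{eq:opt:basic-idea:first-cut-opt} enforces only the probability-simplex constraints \eqref{subeq:opt:basic-idea:first-cut-opt:pi-ge-0}--\eqref{subeq:opt:basic-idea:first-cut-opt:pi-sum-1} on $\pi$, while placing no constraint whatsoever on the value vector $\v$ and, crucially, no \emph{best-response} (maximization) requirement of the kind appearing in \eqref{eq:sg:dp}. The key algebraic fact I would first record is that, since $\pi^i$ and $\pi^{-i}$ randomize independently, $E_{\pi^i} Q^i(x, a^i) = E_{\pi}\left[ r^i(x,a) + \beta \sum_{y} p(y|x,a) v^i(y)\right]$, so each summand of $f$ is exactly the Bellman residual of the \emph{fixed} strategy-tuple $\pi$ under plain policy evaluation, not under the maximizing operator of \eqref{eq:sg:dp}. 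This already signals the gap: $f=0$ can certify a fixed point of policy evaluation without certifying optimality.

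First I would pick any game possessing a feasible stationary randomized strategy-tuple $\pi^*$ that is \emph{not} a mutual best response, i.e. that violates the maximization in \eqref{eq:opt:basic-dp} for at least one agent and state; such profiles exist in essentially any non-degenerate game. Next I would define $\v^*$ to be the value of following $\pi^*$, namely the unique solution of the linear policy-evaluation system $v^i(x) = E_{\pi^*}\left[ r^i(x,a) + \beta\sum_y p(y|x,a) v^i(y)\right]$ for all $x\in\S, i=1,2$; this system is uniquely solvable because $0<\beta<1$ makes $I - \beta P_{\pi^*}$ invertible. By construction every summand of $f$ vanishes, so $f(\v^*,\pi^*)=0$ while $(\v^*,\pi^*)$ remains feasible. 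Yet $\pi^*$ is not Nash, and $\v^*$, being the value of a non-Nash policy, does not solve the dynamic programming equation \eqref{eq:sg:dp}, which is precisely what the two assertions of the lemma require.

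To make this concrete I would instantiate the construction on a small game — a single-state (i.e. repeated) bimatrix game embedded as a stochastic game, or a two-state game — and verify explicitly that the chosen $\pi^*$ fails the best-response inequality in the Nash definition for some agent, so that the counterexample is genuine rather than an accidental Nash point. To strengthen the conclusion on $\v^*$, I would additionally note that for any fixed feasible $\pi$ the map $\v \mapsto f(\v,\pi)$ is affine with generically non-vanishing gradient, whence its zero set is an entire hyperplane; thus a $\v^*$ satisfying $f=0$ need not even be the value function of any policy, let alone the dynamic programming value.

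The only real obstacle is this verification step: I must ensure the exhibited $\pi^*$ is truly not a Nash strategy-tuple and that $\v^*$ genuinely differs from the dynamic programming value, rather than coinciding with it by coincidence. Once this is checked on one instance, the contrast with Lemma~\ref{lemma:opt:basic-idea:necessary} is transparent: feasibility together with $f=0$ recovers only a policy-evaluation fixed point, whereas the Nash property additionally demands the maximization encoded in \eqref{eq:sg:dp}, which the objective $f$ simply does not see. Hence the converse of Lemma~\ref{lemma:opt:basic-idea:necessary} fails, establishing the claim.
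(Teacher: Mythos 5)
Your proposal is correct and follows essentially the same route as the paper: choose a non-Nash strategy-tuple $\pi^*$, then obtain $\v^*$ as the unique solution of the linear policy-evaluation system $\v^{i*} = (I-\beta P(\pi^*))^{-1}R^i$ (invertibility following from $0<\beta<1$ and $P(\pi^*)$ being stochastic), which makes every summand of $f$ vanish while $\pi^*$ remains non-Nash. The extra observations you offer (the policy-evaluation versus Bellman-optimality framing, and the affine zero set of $\v\mapsto f(\v,\pi)$) are sound embellishments but do not change the argument, which matches the paper's proof by example.
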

\begin{proof}
We provide a proof by example. Choose a $\pi^*$ such that it is not a Nash strategy-tuple. Then, to make $f(\v^*, \pi^*) = 0$, we need to compute a $\v^*$ such that
\begin{equation}
\label{eq:opt:basic-idea:first-cut:simple-dp-value}
v^{i*}(x) - E_{\pi^{i*}(x)} Q^i(x, a^i) = 0, \forall x \in \S, i = 1, 2.
\end{equation}
Let $R^i = \left < E_{\pi^*(x)} r^i(x, a): x \in \S \right >$ be a column vector over rewards to agent $i$ in various states of the underlying game. Also, let $P = \left [ E_{\pi^*(x)} p(y|x, a): x \in \S, y \in \S\right ]$ represent the state-transition matrix of the underlying Markov process. Then, \eqref{eq:opt:basic-idea:first-cut:simple-dp-value} can be written in vector form as 
\begin{equation}
\label{eq:opt:basic-idea:first-cut:simple-dp-value-vector}
\v^{i*} - \left ( R^i + \beta P \v^{i*} \right ) = 0, \forall i = 1, 2.
\end{equation}
Since $P$ is a stochastic matrix, all its eigen-values are less than or equal to one. Thus, the matrix $I - \beta P$ is invertible. So the system of equations \eqref{eq:opt:basic-idea:first-cut:simple-dp-value-vector} has a unique solution with
\[\v^{i*} = \left ( I - \beta P \right )^{-1} R^i, i = 1, 2.\]
Thus for any strategy-tuple $\pi^*$, which need not be Nash, there exists a corresponding $\v^*$ such that $f(\v^*, \pi^*) = 0$.
\end{proof}

\subsubsection{Constraints}
\label{sect:opt:basic-idea:constraints}
The basic optimization problem \eqref{eq:opt:basic-idea:first-cut-opt} has only a set of simple constraints ensuring that $\pi$ remains a valid strategy. As shown in lemma \ref{lemma:opt:basic-idea:not-sufficient}, this optimization problem is not sufficient to accurately represent Nash equilibria of the underlying general-sum discounted stochastic game. Here, we look at a possible set of additional constraints which might make the optimization problem more useful. Note that the term being maximized in equation \eqref{eq:opt:basic-dp}, i.e., $E_{\pi^i} Q^i(x, a^i)$, represents a convex combination of the values of $Q^i(x, a^i)$ over all possible actions $a^i \in \A^i(x)$ in a given state $x \in \S$ for a given agent $i$. Thus, it is implicitly implied that \[Q^i(x, a^i) \le v^i(x), \forall a^i \in \A^i(x), x \in \S, i = 1, 2, \dots, N.\]
So, we could consider a new optimization problem with these additional constraints. However, the previously posed question remains: Is this good enough to make $f(\v, \pi) = 0$, for a feasible $(\v, \pi)$ to correspond to a Nash equilibrium? We show that this is indeed true in the next section.

\subsection{Optimization Problem for two-player Stochastic Games}\label{subsect:opf}

An optimization problem on similar lines as in Section~\ref{sect:opt:basic-idea}, for a two-player general-sum discounted stochastic game has been given by \cite{FilarVrieze}. The optimization problem is as follows:
\begin{equation}
\label{eq:opt:2-player-opt}
\left .
\begin{array}{l}
\min \limits_{\v,\pi} f(\v, \pi) = \sum \limits_{i = 1}^{2} {\underline{1}}_{|\S|}^T \left [ \v^i - \r^i(\pi) - \beta P(\pi) \v^i \right ] \text{s.t.} \\
\subequationitem\label{subeq:opt:2-player:agent1} \pi^2(x)^T \left [ \r^1(x) + \beta \sum \limits_{y \in U(x)} P(y|x) v^1(y) \right ] \leq v^1(x) {\underline{1}}_{m^1(x)}^T \text{ } \forall x \in \S \\
\subequationitem\label{subeq:opt:2-player:agent2} \left [ \r^2(x) + \beta \sum \limits_{y \in U(x)} P(y|x) v^2(y) \right ] \pi^1(x)  \leq v^2(x) \underline{1}_{m^2(x)} \text{ } \forall x \in \S \\
\subequationitem\label{subeq:opt:2-player:agent1strategy} \pi^1(x)^T\underline{1}_{m^1(x)} = 1 \text{ } \forall x \in \S \\
\subequationitem\label{subeq:opt:2-player:agent2strategy} \pi^2(x)^T\underline{1}_{m^2(x)} = 1 \text{ }\forall x \in \S \\
\subequationitem\label{subeq:opt:2-player:agent1prob} \pi^1(x, a^1) \geq 0 \text{ } \forall a^1 \in \A^1(x)~\forall x \in \S \\
\subequationitem\label{subeq:opt:2-player:agent2prob} \pi^2(x, a^2) \geq 0 \text{ } \forall a^2 \in \A^2(x)~\forall x \in \S.
\end{array} \right \}
\end{equation}
where,
\begin{inparaenum}[\\\bfseries (i)]
\item $\v = \left < \v^i : i = 1, 2 \right >$ is the vector of value vectors of all agents with $\v^i = \left < v^i(y) : y \in \S \right >$ being the value vector for the \ith{i} agent (over all states). Here, $v^i(x)$ is the value of the state $x \in \S$ for the $i^{th}$ agent.
\item $\pi = \left < \pi^i : i = 1, 2 \right >$ and $\pi^i = \left < \pi^i(x) : x \in \S \right >$, where $\pi^i(x) = \left < \pi^i(x,a) : a \in \A^i(x) \right >$ is the randomized policy vector in state $x \in \S$ for the \ith{i} agent. Here $\pi^i(x, a)$ is the probability of picking action $a$ by the \ith{i} agent in state $x$.
\item $\r^i(x) = \left [ r^i(x,a^1,a^2): a^1 \in \A^1(x), a^2 \in \A^2(x) \right ]$ is the reward matrix for the \ith{i} agent when in state $x \in \S$ with rows corresponding to the actions of the second agent and columns corresponding to that of the first. Here, $r^i(x,a^1,a^2)$ is the reward obtained by the \ith{i} agent in state $x \in \S$ when the first agent has taken action $a^1 \in \A^1(x)$ and the second agent $a^2 \in \A^2(x)$.
\item $\r^i(x,a^1,\A^2(x))$ is the column in $r^i(x)$ corresponding to the action $a^1 \in \A^1(x)$ of the first player. Each entry in the column corresponds to one action of the second player which is why we use $\A^2(x)$ as an argument above. Likewise, we have $\r^i(x,\A^1(x),a^2)$ is the row in $\r^i(x)$ corresponding to the action $a^2 \in \A^2(x)$ of the second player.
\item $\r^i(\pi) = \r^i(\left < \pi^1, \pi^2 \right >) = \left < \pi^2(x)^T \r^i(x) \pi^1(x) : x \in \S \right >$, where $\pi^2(x)^T \r^i(x) \pi^1(x)$ represents the expected reward for the given state $x$ when actions are selected by both agents according to policies $\pi^1$ and $\pi^2$ respectively.
\item $P(y|x) = [ p(y|x,a): a = \left <a^1, a^2\right >, a^1 \in \A^1(x), a^2 \in \A^2(x) ]$ is a matrix representing the probabilities of transition from the current state $x \in \S$ to a possible next state $y \in \S$ at the next instant with rows representing the actions of the second player and columns representing those of the first player.
\item $P(y|x,a^1,\A^2(x))$ is the column in $P(y|x)$ corresponding to the case when  the first player picks action $a^1~\in~\A^1(x)$. As with $r^i(x,a^1,\A^2(x))$, each entry in the above column corresponds to an action of the second player. Similarly, $P(y|x,\A^1(x),a^2)$ is the row in $P(y|x)$ corresponding to the case when the second player picks an action $a^2~\in~\A^2(x)$.
\item $P(\pi) = P(\left < \pi^1, \pi^2 \right >)=\left [ \pi^2(x)^T P(y|x) \pi^1(x) : x \in \S, y \in \S \right ]$ is a matrix with columns representing the possible current states $x$ and rows representing the future possible states $y$. Here, $\pi^2(x)^T P(y|x) \pi^1(x)$ represents the transition probability from $x$ to $y$ under policy $\pi$.
\item $m^i(x) = |\A^i(x)|$, and (recall that)
\item $U(x) \subseteq \S$ represents the set of next states for a given state $x \in \S$.
\end{inparaenum}

The inequality constraints in the optimization problem are quadratic in $\v$ and $\pi$. The first set of inequality constraints (\ref{subeq:opt:2-player:agent1}) on the first agent are quadratic in $\v^1$ and $\pi^2$ and the second set (\ref{subeq:opt:2-player:agent2}) on the second agent are quadratic in $\v^2$ and $\pi^1$ respectively. However, in both cases, the quadratic terms are only cross products between the components of a value vector and a strategy vector.

The objective function is a non-negative cubic function of $\v$ and $\pi$. All the terms in the objective function consist of only cross terms. The cross terms between the value vector of an agent and the strategy vector of either agent are present in the term $P(\pi) \v^i$ of the objective function and those between strategy vectors of the two agents are in the term $\r^i(\pi)$.

We modify the constraints in the optimization problem (\ref{eq:opt:2-player-opt}) by eliminating all the equality constraints in it, as follows: One of the elements $\pi^i(x, a^i), a^i \in \A^i(x),$ in each of the equations ${\underline{1}}_{m^i(x)}^T\pi^i(x, a^i) = 1$ can be automatically set, thereby resulting in inequality constraints over the remaining components as below. Let $a^i(x), i=1,2$ denote the actions eliminated using the equality constraint. Then, the set of constraints can be re-written as in (\ref{eq:grad-tech:2-player-opt-inequality}).
\begin{equation}
\label{eq:grad-tech:2-player-opt-inequality}
\left .
\begin{array}{l}
\pi^2(x)^T \left [ r^1(x) + \beta \sum \limits_{y \in U(x)} P(y|x) v^1(y) \right ] \leq v^1(x) {\underline{1}}_{m^1(x)}^T \text{ }\forall x \in \S \\
\left [ r^2(x) + \beta \sum \limits_{y \in U(x)} P(y|x) v^2(y) \right ] \pi^1(x) \leq v^2(x) \underline{1}_{m^2(x)} \text{ } \forall x \in \S \\
\sum \limits_{\overline{a}^i \in \A^i(x)\backslash \{a^i(x)\}} \pi^i(x, \overline{a}^i) \leq 1 \qquad\forall x \in \S, i=1,2,\\
\pi^i(x, \overline{a}^i) \geq 0 \qquad\forall \overline{a}^i \in \A^i(x)\backslash \{a^i(x)\}~\forall x \in \S, i=1,2.
\end{array} \right \}
\end{equation}
The variables $\pi^i(x, a^i(x)) = 1 - \sum_{\overline{a}^i \in \mathcal{A}^i(x)\backslash \{a^i(x)\}} \pi^i(x, \overline{a}^i),\hspace{1ex}\forall x \in \mathcal{S}, i = 1,2$ are implicitly assigned in the above set of constraints. For the sake of simplicity, in the above, the equations related to the values $v^1$ and 
$v^2$ are written without performing elimination of the quantities $\pi^i(x, a^i)$. For further simplicity,
we represent all the inequality constraints in (\ref{eq:grad-tech:2-player-opt-inequality}) as $g_j(\v, \pi) \leq 0,~j = 1, 2,\dots n,$ where $n$ is the total number of constraints.

\subsection{Theoretical Results on the Optimization Problem}\label{subsect:theoretical}

The optimization problem described above is applicable for general-sum two-agent discounted stochastic games.  \cite[Theorems 3.8.1--3.8.3]{FilarVrieze} are given below as Theorems \ref{theorem:opt:fv1}-\ref{theorem:opt:fv3}. See \cite[pp.~130-132]{FilarVrieze} for a proof of these results.
\begin{theorem}
\label{theorem:opt:fv1}
In a general-sum, discounted stochastic game, there exists a Nash equilibrium in stationary strategies.
\end{theorem}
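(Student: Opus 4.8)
The plan is to prove existence by a fixed-point argument on the best-response correspondence over the space of stationary randomized strategy-tuples, the dynamic analogue of the construction of \cite{nash1950equilibrium} for normal-form games. First I would fix the domain: for each player $i$ the set of stationary randomized strategies is $\prod_{x \in \S} \Delta(\A^i(x))$, a product of probability simplices and hence a nonempty, compact, convex subset of a finite-dimensional Euclidean space; the joint space $\Pi = \prod_{i=1}^{2}\prod_{x \in \S} \Delta(\A^i(x))$ inherits all three properties. The reduction that makes \emph{stationary} equilibria attainable is that, once the opponent is frozen at a stationary $\pi^{-i}$, player $i$ faces an ordinary discounted MDP, for which a stationary optimal policy is known to exist \citep{bertsekas}.

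Next I would make the value map well-defined and continuous in $\pi$. For a fixed tuple $\pi$ the values $\v^i_\pi$ solve the linear system $\v^i = \r^i(\pi) + \beta P(\pi)\v^i$; since $0<\beta<1$ and $P(\pi)$ is stochastic, $I - \beta P(\pi)$ is invertible, by the same estimate used in the proof of Lemma~\ref{lemma:opt:basic-idea:not-sufficient}, so that $\v^i_\pi = (I - \beta P(\pi))^{-1}\r^i(\pi)$ is uniquely determined. Because $P(\pi)$ and $\r^i(\pi)$ are polynomial in $\pi$ and matrix inversion is continuous wherever the inverse exists (which is everywhere on $\Pi$ here), the map $\pi \mapsto \v^i_\pi$ is continuous.

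I would then define the best-response correspondence $\Phi$ on $\Pi$ by declaring $\tilde\pi \in \Phi(\pi)$ iff, for every agent $i$ and state $x$, $\tilde\pi^i(x)$ maximizes $E_{\tilde\pi^i(x)} Q^i_\pi(x, a^i)$ over $\Delta(\A^i(x))$, where $Q^i_\pi(x,a^i) = E_{\pi^{-i}(x)}[\,r^i(x,a) + \beta\sum_{y} p(y|x,a)\v^i_\pi(y)\,]$ freezes both the continuation values at $\v^i_\pi$ and the opponent at $\pi^{-i}$, exactly as in \eqref{eq:opt:basic-dp}. For fixed $\pi$ this objective is linear in the maximizing argument over the simplex, so the per-state maximizer set is a nonempty convex face, and taking products over states and agents shows $\Phi(\pi)$ is nonempty and convex. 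Continuity of $\pi \mapsto \v^i_\pi$ together with Berge's maximum theorem gives that $\Phi$ is upper hemicontinuous with closed graph, and Kakutani's fixed-point theorem then delivers a $\pi^*$ with $\pi^* \in \Phi(\pi^*)$.

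Finally I would check that $\pi^*$ is a Nash equilibrium. By construction, for each $i$ and $x$, $\pi^{i*}(x)$ attains the per-state maximum in \eqref{eq:sg:dp} against $\pi^{-i*}$ with continuation values $\v^i_{\pi^*}$; the verification theorem for discounted MDPs (equivalently, the one-shot deviation principle) then upgrades this state-by-state optimality to global optimality of $\pi^{i*}$ against $\pi^{-i*}$, giving $v^i_{\pi^*}(x) \ge v^i_{\langle \pi^i, \pi^{-i*}\rangle}(x)$ for all $\pi^i$ and $x$, which is precisely the condition in the definition of Nash equilibrium. I expect the main obstacle to be the upper hemicontinuity step: one must push continuity of $\pi \mapsto \v^i_\pi$ through the matrix inverse and then invoke the maximum theorem with care, since it is the closed-graph property of $\Phi$, rather than its nonemptiness or convexity, that carries the substance of the argument.
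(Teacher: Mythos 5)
Your argument is correct, but note that the paper itself offers no proof of this theorem: it is quoted verbatim as Theorem 3.8.1 of \cite{FilarVrieze} and the reader is referred to pp.~130--132 of that book. The proof given there (going back to Fink and Takahashi) is essentially the same Kakutani fixed-point argument you construct --- auxiliary one-shot games with continuation values $\v^i_\pi$, a convex-valued upper hemicontinuous equilibrium correspondence on the product of simplices, and a verification step via the discounted-MDP optimality equation --- so your proposal matches the intended proof in both structure and substance.
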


\begin{theorem}
\label{theorem:opt:fv2}
Consider a tuple $(\widehat{\v},\widehat{\pi})$. The strategy $\widehat{\pi}$ forms a Nash equilibrium for the general-sum discounted game if and only if $(\widehat{\v},\widehat{\pi})$ is the global minimum of the optimization problem with $f(\widehat{\v},\widehat{\pi}) = 0$.
\end{theorem}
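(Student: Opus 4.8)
The plan is to exploit the fact that on the feasible region of \eqref{eq:opt:2-player-opt} the objective is bounded below by zero, and then to identify its zero level set with the best-response (hence Nash) condition. First I would observe that the inequality constraints \eqref{subeq:opt:2-player:agent1} and \eqref{subeq:opt:2-player:agent2} are exactly the componentwise statements $Q^i(x,a^i) \le v^i(x)$ for every $a^i \in \A^i(x)$, $x \in \S$, $i=1,2$, where $Q^i(x,a^i) = E_{\pi^{-i}(x)}[r^i(x,a) + \beta\sum_{y \in U(x)} p(y|x,a)v^i(y)]$. Since constraints \eqref{subeq:opt:2-player:agent1strategy}--\eqref{subeq:opt:2-player:agent2prob} make each $\pi^i(x)$ a probability distribution over $\A^i(x)$, averaging $Q^i(x,a^i)\le v^i(x)$ against $\pi^i(x)$ gives $E_{\pi^i}Q^i(x,a^i)\le v^i(x)$, so each summand $v^i(x)-E_{\pi^i}Q^i(x,a^i)$ of $f$ is non-negative. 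Summing over $x$ and $i$ yields $f(\v,\pi)\ge 0$ on the whole feasible set, with equality precisely when $v^i(x)=E_{\pi^i}Q^i(x,a^i)$ for all $x,i$.

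For the ``only if'' direction I would start from a Nash strategy $\widehat{\pi}$ with its dynamic-programming value $\widehat{\v}$ from \eqref{eq:sg:dp}. Optimality there means $v^i(x)=\max_{a^i}Q^i(x,a^i)$, which at once establishes feasibility (every $Q^i(x,a^i)\le v^i(x)$ gives \eqref{subeq:opt:2-player:agent1}--\eqref{subeq:opt:2-player:agent2}, and $\widehat{\pi}$ being a strategy gives the rest) and shows that the maximizing randomized strategy attains $E_{\widehat{\pi}^i}Q^i(x,a^i)=v^i(x)$, so $f(\widehat{\v},\widehat{\pi})=0$; this is essentially Lemma \ref{lemma:opt:basic-idea:necessary}. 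By the paragraph above this is the least possible value, so $(\widehat{\v},\widehat{\pi})$ is a global minimum with objective zero, and Theorem \ref{theorem:opt:fv1} guarantees such a point exists, so the global minimum value of \eqref{eq:opt:2-player-opt} is indeed $0$.

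The substantive direction is the converse. Take feasible $(\widehat{\v},\widehat{\pi})$ with $f=0$. The equality characterization gives $\widehat{v}^i(x)=E_{\widehat{\pi}}[r^i(x,a)+\beta\sum_{y} p(y|x,a)\widehat{v}^i(y)]$ for all $x$, i.e. in vector form $\widehat{\v}^i=\r^i(\widehat{\pi})+\beta P(\widehat{\pi})\widehat{\v}^i$. Exactly as in Lemma \ref{lemma:opt:basic-idea:not-sufficient}, $I-\beta P(\widehat{\pi})$ is invertible, so $\widehat{\v}^i$ must equal the true discounted value $v^i_{\widehat{\pi}}$ of the stationary strategy $\widehat{\pi}$. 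To upgrade this to the Nash inequality, fix agent $i$ and freeze the opponent at $\widehat{\pi}^{-i}$; this defines a single-agent MDP with Bellman optimality operator $(T^i w)(x)=\max_{a^i}E_{\widehat{\pi}^{-i}}[r^i(x,a)+\beta\sum_{y} p(y|x,a)w(y)]$, whose unique fixed point $v^{i,*}$ is the best-response value against $\widehat{\pi}^{-i}$. Constraints \eqref{subeq:opt:2-player:agent1}--\eqref{subeq:opt:2-player:agent2} read $\widehat{v}^i(x)\ge\max_{a^i}Q^i(x,a^i)=(T^i\widehat{v}^i)(x)$, so $\widehat{\v}^i$ is a super-solution of $T^i$. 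Since $T^i$ is monotone and a $\beta$-contraction, iterating gives $\widehat{\v}^i\ge T^i\widehat{\v}^i\ge\cdots\ge\lim_n (T^i)^n\widehat{\v}^i=v^{i,*}$, while $\widehat{\v}^i=v^i_{\widehat{\pi}}\le v^{i,*}$ because $v^{i,*}$ maximizes over all agent-$i$ strategies. Hence $\widehat{\v}^i=v^{i,*}$, so $\widehat{\pi}^i$ is a best response to $\widehat{\pi}^{-i}$; as this holds for $i=1,2$, $\widehat{\pi}$ is Nash.

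The crux, and the only step beyond bookkeeping, is this passage from the super-solution inequality $\widehat{\v}^i\ge T^i\widehat{\v}^i$ to $\widehat{\v}^i\ge v^{i,*}$, which relies on the monotonicity and contraction of the single-agent Bellman operator together with the prior identification (via invertibility of $I-\beta P(\widehat{\pi})$) of $\widehat{\v}^i$ as the genuine value of $\widehat{\pi}$ rather than an arbitrary vector; without the latter the comparison $v^i_{\widehat{\pi}}\le v^{i,*}$ would be meaningless. Everything else follows immediately from reinterpreting the constraints as the inequalities $Q^i(x,a^i)\le v^i(x)$.
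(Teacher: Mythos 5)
Your proof is correct. The paper itself does not supply a proof of this statement---it reproduces it as Theorem 3.8.2 of Filar and Vrieze and refers the reader to pp.~130--132 of that book---but your argument (non-negativity of $f$ on the feasible set, identification of the zero level set with $\v^i = \r^i(\pi) + \beta P(\pi)\v^i$ and hence with the true value of $\pi$ via invertibility of $I - \beta P(\pi)$, and the super-solution/contraction argument for the frozen-opponent MDP to get the best-response property) is essentially the standard proof given in that reference.
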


Thus, the optimization problem defined in (\ref{eq:opt:2-player-opt}) has at least one global optimum having value zero which corresponds to the Nash equilibrium for the stochastic game.

\begin{theorem}
\label{theorem:opt:fv3}
Let $(\widehat{\v}, \widehat{\pi})$ be a feasible point for (\ref{eq:opt:2-player-opt}) with an objective function value $\gamma > 0$. Then $\widehat{\pi}$, forms an $\epsilon$-Nash equilibrium with $\epsilon \leq \dfrac{\gamma}{1 - \beta}$.
\end{theorem}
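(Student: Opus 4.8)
The plan is to read the constraints and objective of~\eqref{eq:opt:2-player-opt} as statements about Bellman operators, and then to \emph{sandwich} each optimization variable $\widehat\v^i$ between the true value of $\widehat\pi$ and the best-response value, forcing these two to lie within $\gamma/(1-\beta)$ of each other. Recall that $\widehat\pi$ is an $\epsilon$-Nash equilibrium exactly when, for every agent $i$ and state $x$, no unilateral deviation gains more than $\epsilon$; writing $w^i(x) \defn \max_{\pi^i} v^i_{\langle \pi^i, \widehat\pi^{-i}\rangle}(x)$ for the best-response value and $v^i_{\widehat\pi}$ for the value of $\widehat\pi$ from~\eqref{eq:games:sg-value}, the goal is to show $w^i(x) - v^i_{\widehat\pi}(x) \le \gamma/(1-\beta)$.

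First I would rewrite the $i$-th summand of $f$ as $\underline{1}_{|\S|}^T[\widehat\v^i - \r^i(\widehat\pi) - \beta P(\widehat\pi)\widehat\v^i] = \sum_{x\in\S}\delta^i(x)$, where $\delta^i(x) \defn \widehat v^i(x) - E_{\widehat\pi}[\,r^i(x,a) + \beta\sum_{y} p(y|x,a)\widehat v^i(y)\,]$ is the residual of the policy-evaluation equation under $\widehat\pi$. Multiplying the agent-$i$ inequality constraint, which for agent $1$ is~(\ref{subeq:opt:2-player:agent1}), by the nonnegative probabilities $\widehat\pi^i(x,a^i)$ and summing over $a^i$ gives precisely $\delta^i(x) \ge 0$ for every $x$ --- the averaged form of $Q^i(x,a^i) \le v^i(x)$. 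Since $\delta^1,\delta^2 \ge 0$ and $f(\widehat\v,\widehat\pi) = \sum_i\sum_x \delta^i(x) = \gamma$, I get $\sum_x \delta^i(x) \le \gamma$, and hence $0 \le \delta^i(x) \le \gamma$ at each state.

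Next I would derive the two sandwiching bounds. For the upper bound, the residual identity is $(I - \beta P(\widehat\pi))\widehat\v^i = \r^i(\widehat\pi) + \delta^i$, so by invertibility of $I-\beta P(\widehat\pi)$ (Lemma~\ref{lemma:opt:basic-idea:not-sufficient}) I obtain $\widehat\v^i = v^i_{\widehat\pi} + (I-\beta P(\widehat\pi))^{-1}\delta^i$. Expanding the resolvent as the Neumann series $\sum_{k\ge 0}\beta^k P(\widehat\pi)^k$, whose terms are entrywise nonnegative with each $P(\widehat\pi)^k$ row-stochastic, every $(P(\widehat\pi)^k\delta^i)(x)$ is a convex combination of the entries of $\delta^i$, so each entry of $(I-\beta P(\widehat\pi))^{-1}\delta^i$ is bounded by $\sum_{k\ge0}\beta^k \max_y \delta^i(y) = (\max_y\delta^i(y))/(1-\beta) \le \gamma/(1-\beta)$; thus $0 \le \widehat v^i(x) - v^i_{\widehat\pi}(x) \le \gamma/(1-\beta)$. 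For the lower bound, taking the maximum over $a^i$ in the same constraint shows that $\widehat\v^i$ is a super-solution of the Bellman optimality equation of the MDP faced by agent $i$ with the opponent frozen at $\widehat\pi^{-i}$; by monotonicity and the $\beta$-contraction property of that optimality operator, iterating it downward from $\widehat\v^i$ converges to $w^i$, so $\widehat v^i(x) \ge w^i(x)$ for all $x$.

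Combining the two bounds, $w^i(x) \le \widehat v^i(x) \le v^i_{\widehat\pi}(x) + \gamma/(1-\beta)$, so $w^i(x) - v^i_{\widehat\pi}(x) \le \gamma/(1-\beta)$ for every agent $i$ and state $x$, which is the claimed $\epsilon$-Nash property with $\epsilon \le \gamma/(1-\beta)$. I expect the main obstacle to be conceptual rather than computational: $\widehat\v^i$ is a free variable that need not be the value of \emph{any} strategy, and the crux is to see that the constraints force it to serve two roles at once --- dominating the best-response value $w^i$ (through the max-over-actions constraint and the super-solution property) while overestimating the genuine value $v^i_{\widehat\pi}$ of $\widehat\pi$ by at most the total residual $\gamma$ amplified by $1/(1-\beta)$. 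The only place demanding care in the calculation is keeping the indexing/transpose conventions of $P(\widehat\pi)$ consistent in the Neumann-series estimate.
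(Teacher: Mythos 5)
Your proof is correct and complete. The paper itself does not supply a proof of this theorem --- it simply cites \cite[Theorem 3.8.3, pp.~130--132]{FilarVrieze} --- and your argument is precisely the standard one given there: averaging the constraints over $\widehat{\pi}^i$ to show the residual $\delta^i \geq 0$ with $\sum_x \delta^i(x) \leq \gamma$, the Neumann-series estimate $\widehat{v}^i = v^i_{\widehat{\pi}} + (I-\beta P(\widehat{\pi}))^{-1}\delta^i$ giving the upper sandwich, and the super-solution/monotonicity argument for $\widehat{v}^i \geq w^i$ giving the lower one.
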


The above result in simple terms, says that, being in a small neighbourhood of a global optimal point of the optimization problem \eqref{eq:opt:2-player-opt} corresponds to being in a small neighbourhood of the corresponding Nash equilibrium. Thus, there is a correspondence between global optima and Nash equilibria. Thus, this is an important result from the point of view of numerical convergence behaviour.

\section{A Gradient Descent Scheme}
\label{sect:grad-tech:scheme}
The optimization problem \eqref{eq:opt:2-player-opt} for two-player general-sum stochastic games, has an interesting structure with only cross products between optimization variables appearing in both the objective function as well as the constraints. So, as the first naive way of handling this optimization problem, we see whether the same can be broken down into smaller problems via a {\em uni-variate} type scheme \cite[Section 5.4, pp. 350]{ssrao}. It is possible to see that the original problem can be split into two sets of linear optimization problems with \begin{inparaenum}[(i)] \item the first set having two optimization problems in $v^1$ and $v^2$ separately. Here, $\pi$ is held constant; and, \item the second having one in $\left <\pi^1(x), \pi^2(x)\right >$ for every possible state $x \in \S$. In each of these cases, $v$ is held constant. \end{inparaenum} Thus, with a uni-variate type of break down of the original problem, we get several smaller problems that can be easily solved. However, a major drawback of this approach is the inherent deficiency of the uni-variate methods which do not have guaranteed convergence in general. In fact, we observed in simulations and also through numerical calculations that this approach does indeed fail because of the above mentioned deficiency. Hence, we look at devising a non-linear programming approach. The algorithm to be discussed is mainly based on an interior-point search algorithm by \cite{twostagefeasible}.

We first discuss the difficulties posed by the optimization problem \eqref{eq:opt:2-player-opt}. We try to address these issues in the subsequent sections by presenting a suitable gradient-based algorithm. With a suitable initial feasible point, the iterative procedure of \cite{twostagefeasible} converges to a constrained local minimum of a given optimization problem. The unmodified algorithm of \cite{twostagefeasible} is presented in Section \ref{sect:grad-tech:herskovits}. Section \ref{sect:grad-tech:scheme:initial-point} discusses a scheme for finding an initial feasible point. Exploiting the knowledge about the functional forms of the objective and constraints, we present in Section \ref{sect:grad-tech:scheme:optimal-step} our modification in Herskovits algorithm to the procedure of selection of a suitable step length. And finally the modified algorithm in full, is provided in Section \ref{sect:grad-tech:scheme:algorithm}.

\subsection{Difficulties}\label{sect:grad-tech:scheme:difficulties}
We note that the optimization problem \eqref{eq:opt:2-player-opt} presents the following difficulties.
\begin{enumerate}
\item \textbf{Dimensionality -} The numbers of variables and constraints involved in the optimization problem are large. For the two agent scenario, the number of variables can be shown to be twice the sum of the cardinalities of the state and action spaces. For instance, in the terrain exploration problem discussed in Section \ref{sect:grad-tech:terrain}, for a simple $4 \times 4$ grid terrain with two agents and two objects, the number of variables is $(647 \times 2) + (4169 \times 2) = 9632$. The total number of inequality constraints for the same can also be computed to be $(4169 \times 2) + (4169 \times 2) = 16676$.
\item \textbf{Non-convexity -} The constraint region in the optimization problem is not necessarily convex. In fact, during simulations related to the terrain exploration problem (Section \ref{sect:grad-tech:terrain}), we observed that the condition does not hold for many constraints. So, in general, the optimization problem (\ref{eq:opt:2-player-opt}) has non-convex feasible region.
\item \textbf{Issue with steepest descent -} As explained in Section \ref{sect:opt:basic-idea}, the objective function in the optimization problem \eqref{eq:opt:2-player-opt} is obtained by averaging over strategies, the inequality constraint sets (\ref{subeq:opt:2-player:agent1}) and (\ref{subeq:opt:2-player:agent2}) respectively. This has an effect on the steepest descent gradient directions at the constraint boundaries. The steepest descent direction has been found to be always opposing the constraint boundaries. As a result, a gradient method with the steepest descent direction as its search direction will get stuck when it hits a constraint boundary.
\end{enumerate}

\subsection{The Herskovits Algorithm}
\label{sect:grad-tech:herskovits}
We observed that in the optimization problem (\ref{eq:opt:2-player-opt}), steepest descent directions most often oppose the active constraint boundaries. Hence a steepest descent direction cannot be used as it would get stuck at one such boundary point which may not be an optimal point. Herskovits method offers two features which address this issue: \begin{inparaenum}[(1)] \item The search direction selected at each iteration, while being a strictly descent direction, makes use of the knowledge of the gradients of constraints as well as the gradient of the objective; and \item the procedure is {\em strictly feasible}, i.e., at any iteration, the current best feasible point is not touching any constraint boundary. \end{inparaenum}

\paragraph*{Assumptions:}\ \\
The assumptions required for the two-stage method are as follows: 
\begin{enumerate}[(i)] 
\item The feasible region $\Omega$ has an interior $\Omega^o$ and is equal to the closure of $\Omega^o$, i.e., $\Omega = \bar \Omega^o$. 
\item Each $\left < \v, \pi \right > \in \Omega^o$, satisfies $g_i(\v, \pi) < 0, i = 1, 2, \dots, n$. 
\item There exists a real number $a$ such that the level set $\Omega_a = \{ \left < \v, \pi \right > \in \Omega | f(\v, \pi) \le a \}$ is compact and has an interior. 
\item The function $f$ is continuously differentiable and $g_j, j = 1,2,\dots,n$, is twice continuously differentiable in $\Omega_a$. 
\item \label{asm:grad-tech:active-set-gradients} At every $\left < \v, \pi \right > \in \Omega_a$, the gradients of active constraints form an independent set of vectors. 
\end{enumerate}

It can be seen that assumptions (i)-(iv) are easily verified considering the functional forms of the objective and constraints of the optimization problem \eqref{eq:opt:2-player-opt} and that state space, $\S$ and action space, $\A$, are assumed to be finite. Assumption (v) is carried over as it is. We present the algorithm of \cite{twostagefeasible} in two parts: First, we provide the two-stage feasible direction method in Algorithm \ref{algo:grad-tech:twostagefeasible} and then in Algorithm~\ref{algo:grad-tech:herskovits}, we present the full algorithm.

\begin{nonfloatalgorithm}{Two-stage Feasible Direction Method}
\label{algo:grad-tech:twostagefeasible}
\begin{algorithmic}
\PARAM $\alpha \in (0,1)$, $\rho_0 > 0$
\PARAM $w_j(v_0, \pi_0) > 0, j = 1,2,\dots,n$, continuous functions
\INPUT $\nabla f(v_0, \pi_0)$, $\nabla g_j(v_0,\pi_0), j = 1,2,\dots,n$
\OUTPUT $S$, a feasible direction
\vspace{0.5ex}
\hrule
\vspace{0.5ex}
\Step Set $\rho \leftarrow \rho_0$.	

\Step Compute $\gamma_0 \in \Re^{n}$, $S_0 \in \Re^{n}$ by solving the linear system
\begin{equation}
\label{eq:twostagefirst}
\left .\begin{array}{l}
S_0 = -\left[ \nabla f(v_0, \pi_0) + \sum\limits_{j = 1}^{n} \gamma_{0j} \nabla g_j(v_0, \pi_0)\right ]\\
S_0^T\nabla g_j(v_0, \pi_0) = - w_j(v_0,\pi_0)\gamma_{0j} g_j(v_0, \pi_0), j = 1,2,\dots,n 
\end{array}\right \}
\end{equation}
\Step Stop and output $S \leftarrow 0$ if $S_0 = 0$.

\Step Compute $\rho_1 = \dfrac{(1 - \alpha)}{\sum\limits_{i = 1}^{n} \gamma_{0i}}$, if $\sum\limits_{i = 1}^{n} \gamma_{0i} > 0$. Also, $\rho \leftarrow \frac{\rho_1}{2}$ if $\rho_1 < \rho$.

\Step Compute $\gamma \in \Re^n$ and $S \in \Re^n$ by solving the linear system
\begin{equation}
\label{eq:twostagesecond}
\left .\begin{array}{l}
S = -\left[ \nabla f(v_0, \pi_0) + \sum\limits_{j = 1}^{n} \gamma_{j} \nabla g_j(v_0, \pi_0)\right ]\\
S^T\nabla g_j(v_0, \pi_0) = - \left [ w_j(v_0,\pi_0)\gamma_{j} g_j(v_0, \pi_0) + \rho \|S_0\|^2 \right ], j = 1,2,\dots,n 
\end{array}\right \}
\end{equation}
where $\|S_0\|$ is the Euclidean norm of the direction vector $S_0$.
\Step Output $S$.
\end{algorithmic}
\end{nonfloatalgorithm}

This method computes a feasible direction in two stages. In the first stage (equation (\ref{eq:twostagefirst})), it computes a descent direction $S_0$. By using its squared norm as a factor, a feasible direction $S$ is computed in the second stage (equation (\ref{eq:twostagesecond})). Note that the second stage ensures that all the active constraints have $S^T\nabla g_j(v_0, \pi_0) = -\rho\|S_0\|^2$ where the right hand side is strictly negative. Thus, gradients of active constraints are maintained at obtuse angles with the direction $S$ and hence, the vector $S$ points away from the active constraint boundaries. Thus, feasibility of the direction $S$ gets ensured. Let $S = \left < S^1_v, S^2_v, S^1_\pi, S^2_\pi\right >$ be a descent search direction where $S_v^i$ (resp. $S_\pi^i$) is the search direction in $v^i$ (resp. $\pi^i$) for $i = 1,2$. Also, let $S_v = \left < S_v^1, S_v^2 \right >$ and $S_\pi = \left < S_\pi^1, S_\pi^2 \right >$. We now present the original Herskovits algorithm as Algorithm \ref{algo:grad-tech:herskovits}.

\begin{nonfloatalgorithm}{The Herskovits Algorithm}
\label{algo:grad-tech:herskovits}
\begin{algorithmic}
\PARAM $\nu > 1$, $\delta_0 \in (0, 1), \eta \in (0, 1)$
\INPUT $\left <v_0, \pi_0 \right >$: initial feasible point which is a strict interior point.
\OUTPUT $\left <v^*, \pi^*\right >$
\hrule\vspace{0.5ex}
\Step iteration $\leftarrow 1$.
\Step $\left <\widehat v, \widehat\pi \right > \leftarrow \left <v_0, \pi_0 \right >$
\LOOP
\Step Compute feasible direction $S$ using the two-stage feasible direction method (algorithm~\ref{algo:grad-tech:twostagefeasible}).
\Step Stop algorithm if $S = 0$. Set $\left <v^*, \pi^*\right > \leftarrow \left <\widehat{v}, \widehat{\pi}\right >$. Output $\left <v^*, \pi^*\right >$.
\Step\label{step:grad-tech:herskovits:delta} Let $\gamma = \left < \gamma_1, \gamma_2, \dots, \gamma_n \right >$, be the Lagrange multiplier vector computed in Algorithm~\ref{algo:grad-tech:twostagefeasible}. Define $\delta_j = \delta_0$ if $\gamma_j \ge 0$ and $\delta_j = 1$ if $\gamma_j < 0$.
\Step Find $t$, the first element in the sequence $\{1, 1/\nu, 1/\nu^2, \dots\}$ such that
\begin{equation}
\label{eq:grad-tech:herskovits:step-condition}
\begin{array}{l}
f(\v + t S_\v, \pi + t S_\pi) \le f(\v, \pi) + t \eta S^T \nabla f(\v, \pi),\text{ and }\\
g(\v + t S_\v, \pi + t S_\pi) \le \delta_i g(\v, \pi), \forall i = 1, 2, \dots, n.
\end{array}
\end{equation}
\Step Stop algorithm if $t = 0$. Set $\left <\v^*, \pi^*\right > \leftarrow \left <\widehat{\v}, \widehat{\pi}\right >$. Output $\left <\v^*, \pi^*\right >$.
\Step $\left <\widehat{\v}, \widehat{\pi}\right > \leftarrow \left <\widehat{\v}, \widehat{\pi}\right > + t S$
\Step iteration $\leftarrow$ iteration $+~1$.
\ENDLOOP
\end{algorithmic}
\end{nonfloatalgorithm}

This algorithm can be tuned by utilizing the knowledge about the structure of the optimization problem \eqref{eq:opt:2-player-opt}. We present the following modifications in this direction:
\begin{enumerate}
\item Computing the initial feasible point by a set of simple linear programs in Section \ref{sect:grad-tech:scheme:initial-point};
\item Exploiting the sparsity of the matrix involved in computing the two-stage feasible direction in Section \ref{sect:grad-tech:scheme:sparsity}; and
\item Knowing the cubic form of the objective and the quadratic form of constraints, and computing an optimal step-length in Section~\ref{sect:grad-tech:scheme:optimal-step}. However, we keep the condition \ref{eq:grad-tech:herskovits:step-condition} satisfied while selecting the step-length.
\end{enumerate}

\subsection{Initial Feasible Point}
\label{sect:grad-tech:scheme:initial-point}

The optimization problem given in (\ref{eq:opt:2-player-opt}) has a distinct separation between strategy probability terms and value vector terms. This can be exploited to find an initial feasible solution using the following procedure. First, a feasible strategy is selected, for instance, a uniform strategy, $\pi_0 = \left < \pi_0^i : i = 1,2 \right >$ with 
\begin{equation}
\label{eq:initial-pi}
\pi^i_0(x, a) = \dfrac{1}{m^i(x)} \quad \forall a \in \A^i(x), x \in \S, i = 1,2. % = \dfrac{1}{\left | \pi^i(x) \right |} = 
\end{equation}
If this strategy is held constant, then it is easy to see that the main optimization problem (\ref{eq:opt:2-player-opt}) breaks down into two linear programming problems in $v^1$ and $v^2$, respectively, as given in (\ref{eq:grad-tech:value-only}). For the Herskovits algorithm, a strict interior point is desired to start with. That is, the initial point for the algorithm needs to be strictly away from all constraint boundaries. So, we introduce a small positive parameter, $\alpha > 0$, in the left-hand side of the constraints given in \eqref{eq:grad-tech:value-only}.

\begin{equation}
\label{eq:grad-tech:value-only}
\left .\begin{array}{l}
\left .\begin{array}{c}
\min\limits_{v^1} \left \{ \underline{1}_{|\S|}^T\left (v^1 - r^1(\pi_0) - \beta P(\pi_0) v^1 \right ), \right \}\\
\text{s.t.}\qquad
\pi^2_0(x)^T \left [ r^1(x) + \beta \sum \limits_{y \in U(x)} P(y|x) v^1(y) \right ] + \alpha \leq v^1(x) {\underline{1}}_{m^1(x)}^T, \text{ }\forall x \in \S, \\
\end{array}\right ]\\
\vspace{1ex}\\
\left .\begin{array}{c}
\min\limits_{v^2} \left \{ \underline{1}_{|\S|}^T\left (v^2 - r^2(\pi_0) - \beta P(\pi_0) v^2 \right ) \right \},\\
\text{s.t.}\qquad
\left [ r^2(x) + \beta \sum \limits_{y \in U(x)} P(y|x) v^2(y) \right ] \pi^1_0(x)  + \alpha \leq v^2(x) \underline{1}_{m^2(x)} \text{ }\forall x \in \S.\\
\end{array}\right ]\\
\end{array}\right \}
\end{equation}

The two optimization problems in (\ref{eq:grad-tech:value-only}) can be solved readily with the popular method of revised simplex \cite[Chapter 7]{chvatal}. Since the main purpose here is to just get an initial feasible point, the first phase of the revised simplex method in which the auxiliary problem in an artificial variable is solved for, is in itself sufficient. The relevant details related to the first phase of the revised simplex method have been described in \cite[Chapter 7]{chvatal}. An initial feasible point $(v_0, \pi_0)$ can thus be obtained.

\subsection{Sparsity}\label{sect:grad-tech:scheme:sparsity}

The two-stage feasible direction method given in Algorithm~\ref{algo:grad-tech:twostagefeasible} requires inverting a matrix of dimension the same as the number of constraints.  Note that the number of constraints in the optimization problem is large (See Section~\ref{sect:grad-tech:scheme:difficulties}). Thus, in principle, our method would require a large memory and computational effort. However, we observe that the matrix to be inverted is sparse. Hence, we use efficient techniques for sparse matrices that result in a substantial reduction in the computational requirements. The matrix to be inverted is given by
\begin{equation}
\label{eq:sparse-matrix}
H = \left [ \begin{array}{cccc}
\nabla g_1^T \nabla g_1 - w_1 g_1 & \nabla g_1^T \nabla g_2 & \dots & \nabla g_1^T \nabla g_n \\
\nabla g_2^T \nabla g_1 & \nabla g_2^T \nabla g_2 - w_2 g_2 & \dots & \nabla g_2^T \nabla g_n \\
\vdots & \vdots & \ddots & \vdots \\
\nabla g_n^T \nabla g_1 & \nabla g_n^T \nabla g_2 & \dots & \nabla g_n^T \nabla g_n - w_n g_n \\
\end{array}
\right ].
\end{equation}

The elements of $H$ can be seen mainly to be dot products of constraint gradients. In a typical stochastic game, the number of states which are related by non-zero transition probability, is less compared to the total number of states. The same is applicable when we consider the action set. The action set available at each state is usually less overlapping with corresponding (action) sets of other states. In some cases, these sets may be completely disjoint as well. For the simple terrain exploration problem which shall be discussed in Section \ref{sect:grad-tech:terrain}, the above matrix for a $4\times4$ grid scenario with two objects and two agents, is of size $16676 \times 16676$, and is only about $4\%$ full.

We note that the two-stage feasible direction method does not really require an explicit inverse of the matrix. Rather, it requires the solution to the linear system of equations $H \gamma = b$, where $b$ is a vector of appropriate dimension. We target to use decomposition techniques for the purpose in which the matrix is decomposed as $H = L D L^T$ where $L$ is a lower triangular matrix and $D$ is a diagonal matrix. Since $H$ is also sparse, we do sparse $L D L^T$ decomposition of the matrix $H$ using techniques discussed by \cite{amd-soft,ldl}, using publicly available software on the internet. Upon decomposition of the matrix $H$, the solution to $\gamma$ can be easily computed.

\subsection{Computing the Optimal Step-length}
\label{sect:grad-tech:scheme:optimal-step}

The objective function has been shown previously to be cubic and constraints quadratic in the optimization variables. This structure can be exploited to find the optimal step length, $t^*$, in any chosen direction.

\subsubsection[Optimal Step Length]{Optimal Step Length, $t^*$}

Let $(v_0, \pi_0)$ be the current point and $(\v, \pi)$ be the next point obtained from the previous by moving one step along the descent direction. Thus, $v = v_0 + t S_v$ and $\pi = \pi_0 + t S_\pi$. Upon substitution into the objective function $f(\v, \pi)$, one obtains,
\begin{equation}
f(\v, \pi) = f(v_0 + t S_\v, \pi_0 + t S_\pi) = d_0 + d_1 t + d_2 t^2 + d_3 t^3
\end{equation}
where
{\small \begin{equation}
\label{eq:cubic-coeff}
\left .\begin{array}{l}
d_0 = \sum \limits_{i = 1}^{2} \underline{1}_{|\S|}^T \left \{ v^i_0 - r^i(\pi_0) - \beta P(\pi_0) v^i_0 \right \},\\ 
d_1 = \sum \limits_{i = 1}^{2} \underline{1}_{|\S|}^T \{ S^i_v - \left ( r^i(\left <\pi_0^1, S^2_\pi \right >) + r^i(\left < S^1_\pi, \pi_0^2\right >) \right ) \\\qquad\qquad\qquad\qquad\qquad\qquad - \beta \left ( P(\pi_0) S^i_\pi + P(\left <\pi^1_0, S^2_\pi\right >) v^i_0 + P(\left <S^1_\pi, \pi^2_0\right >) v^i_0 \right )\},\\
d_2 = \sum \limits_{i = 1}^{2} \underline{1}_{|\S|}^T \left \{ - r^i(S_\pi)- \beta \left ( P(\left <\pi^1_0, S^2_\pi\right >) S^i_v + P(\left <S^1_\pi, \pi^2_0\right >) S^i_v  + P(S_\pi) v^i_0\right ) \right \},\\
d_3 = \sum \limits_{i = 1}^{2} \underline{1}_{|\S|}^T \left \{ -\beta P(S_\pi) S^i_v \right \}.
\end{array}\right \}
\end{equation}}
In the above equations, the search direction terms $S_\pi^1$ and $S_\pi^2$ have been used in places where strategy terms are expected. Note that the search direction terms $S_\pi^1$ and $S_\pi^2$ do not form strategies. The usage here is purely in the functional sense.

Now, from $\dfrac{\partial f}{\partial t} = 0$, one obtains $d_1 + 2 d_2 t + 3 d_3 t^2 = 0$. Hence the extreme points of $f(\v, \pi)$ are given by $t^* = \dfrac{ - d_2 \pm \sqrt{d_2^2 - 3 d_1 d_3}}{3 d_3}$. \begin{inparaenum}[(i)] \item If $d_2^2 - 3 d_1 d_3 < 0$, then with increasing $t$, the function value shall decrease monotonically in the chosen direction $S$. So, any value $t \geq 0$ is fine without considering the constraints (see Figure \ref{fig:cubic}). \item If $d_2^2 - 3 d_1 d_3 \geq 0$, we get two extreme points in the chosen direction. If any of these points has a negative $t$ value, it is ignored. Since the direction is known to be descent, if one extreme point is negative, then so will be the other extreme point as well (see Figure \ref{fig:cubic}). \end{inparaenum} Till now only the objective function was considered. The approach to handle the constraints will be explained next.

\subsubsection[Constraints on step-length]{Constraints on step-length, $t$}\label{subsubsect:step-length-constriants}

The constraints in the optimization problem (\ref{eq:opt:2-player-opt}) impose limits on the possible values that $t$ can take. Consider the inequality constraint (\ref{subeq:opt:2-player:agent1}) for a particular $a^1 \in \A^1(x)$. Let $g_j(\cdot) \le 0$ represent one of these constraints and let $\delta_i$ represent the corresponding parameter computed in step \ref{step:grad-tech:herskovits:delta} of the Herskovits method (Algorithm~\ref{algo:grad-tech:twostagefeasible}). Apart from feasibility of step-size, we wish to ensure that the condition \eqref{eq:grad-tech:herskovits:step-condition}, i.e., $g_j(\v, \pi) \le \delta_j g_j(v_0, \pi_0),$ is ensured as well.  Now, substituting $v = v_0 + t S_v$ and $\pi = \pi_0 + t S_\pi$, and rearranging we get,
% \begin{equation}
% \left [ \pi^2_0(x) + t S_\pi^2(x) \right ]^T \left [ r^1(x, a^1) + \beta\sum \limits_{y \in U(x)} P(y|x, a^1) \left ( v^1_0(y) + t S^1_v(y) \right ) \right ]
% \leq \left ( v^1_0(x)  + t S^1_v(x) \right )
% \end{equation}
% On rearranging, we get, 
\begin{equation}
\label{eq:grad-tech:quadratic-constraint}
b^1(x, a^1) + c^1(x, a^1) t + d^1(x, a^1) t^2 \leq 0,
\end{equation}
where
{\small \begin{equation}
\label{eq:bcd-firstconstraint}
\left .\begin{array}{l}
b^1(x, a^1) = (1 - \delta_j) g_j(v_0, \pi_0),\\[1ex]
c^1(x, a^1) = \pi^2_0(x)^T \left [ \beta \hspace{-1em}\sum \limits_{y \in U(x)}\hspace{-1em} P(y|x, a^1) S_v^1(y) \right ] -S_v^1(x)\\\qquad\qquad\qquad\qquad\qquad  +S_\pi^2(x)^T \left [ r^1(x, a^1) + \beta \hspace{-1em}\sum \limits_{y \in U(x)}\hspace{-1em} P(y|x, a^1) v^1_0(y) \right ],\\[1ex]
d^1(x, a^1) = S_\pi^2(x)^T \left [ \beta\hspace{-1em} \sum \limits_{y \in U(x)}\hspace{-1em} P(y|x, a^1) S_v^1(y) \right ],\\
\end{array}\right \}
\end{equation}}
respectively. Let $\D(x, a^1) = c^1(x, a^1) c^1(x, a^1) - 4 b^1(x, a^1) d^1(x, a^1)$. Consider the case where $\D < 0$. This implies that the quadratic does not intersect the $t$-axis at any point i.e., it lies fully above the $t$-axis or fully below it. Clearly, $d^1(x, a^1) < 0$, implies that the quadratic lies fully below the $t$-axis and vice versa for $d^1(x, a^1) > 0$. 

\begin{proposition}
If $\D(x, a^1) < 0$, then $d^1(x, a^1) < 0$.
\end{proposition}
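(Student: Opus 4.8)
The plan is to read off the sign of the quadratic at the single point $t = 0$ and then combine that with the elementary fact that a parabola with negative discriminant keeps the sign of its leading coefficient everywhere. Evaluating the left-hand side of \eqref{eq:grad-tech:quadratic-constraint} at $t = 0$ kills the $c^1$ and $d^1$ terms and leaves only $b^1(x, a^1) = (1 - \delta_j)\, g_j(v_0, \pi_0)$. I would first pin down the sign of this constant term. By the definition of $\delta_j$ in step~\ref{step:grad-tech:herskovits:delta}, we have either $\delta_j = \delta_0 \in (0,1)$ or $\delta_j = 1$, so in both branches $1 - \delta_j \ge 0$; and since the current point $(v_0, \pi_0)$ is a strict interior iterate of the Herskovits scheme, assumption~(ii) gives $g_j(v_0, \pi_0) < 0$. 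Hence $b^1(x, a^1) \le 0$.

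Next I would use the hypothesis $\D(x, a^1) < 0$ to sharpen this to a strict inequality. A negative discriminant means the quadratic has no real root, so in particular $t = 0$ is not a root; since the value there is exactly $b^1(x, a^1)$, this forces $b^1(x, a^1) \neq 0$ and therefore $b^1(x, a^1) < 0$. The same hypothesis also rules out the degenerate case $d^1(x, a^1) = 0$: if the leading coefficient vanished, the expression would be affine in $t$ and its ``discriminant'' would reduce to $c^1(x, a^1)^2 \ge 0$, contradicting $\D(x, a^1) < 0$. Thus $d^1(x, a^1) \neq 0$ as well.

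Finally, a quadratic whose discriminant is negative never crosses the $t$-axis, so it is of one sign for all $t$, and that sign is the sign of its leading coefficient $d^1(x, a^1)$. Because the value at $t = 0$ equals $b^1(x, a^1) < 0$, that constant sign must be negative, which yields $d^1(x, a^1) < 0$ and proves the claim. The only genuinely non-routine step is establishing the sign of $b^1(x, a^1)$, i.e. invoking strict interior feasibility to get $g_j(v_0, \pi_0) < 0$ together with $\delta_j \le 1$ in both cases of step~\ref{step:grad-tech:herskovits:delta}; once that is secured, the root-free-parabola geometry closes the argument immediately.
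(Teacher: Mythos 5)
Your proof is correct and rests on the same essential ingredient as the paper's, namely that $b^1(x,a^1) = (1-\delta_j)g_j(v_0,\pi_0) \le 0$ by the definition of $\delta_j$ and the strict feasibility of the current iterate. The only difference is presentational: the paper argues by direct contradiction (if $d^1 \ge 0$ then $b^1 d^1 \le 0$, forcing $\D = (c^1)^2 - 4b^1 d^1 \ge 0$), whereas you reach the same conclusion via the equivalent observation that a root-free parabola keeps the sign of its leading coefficient and is negative at $t=0$.
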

\begin{proof}
Suppose this is not true. Then $d^1(x, a^1) \geq 0$. Hence, $b^1(x, a^1) d^1(x, a^1) \leq 0$ because by definition of $\delta_j$ and $g_j(v_0, \pi_0)$, we have that $b^1 (x, a^1) \leq 0$. Thus, we obtain $c^1(x, a^1) c^1(x, a^1) - 4 b^1(x, a^1) d^1(x, a^1) \geq 0$ which is a contradiction. Hence, $d^1(x, a^1) < 0$. 
\end{proof}

Thus, for the case when $\D(x, a^1) < 0$, any value of $t \geq 0$ is fine as the quadratic is fully below the $t$-axis. Now consider the case where $\D(x, a^1) \ge 0$. On solving the quadratic function, we get its two roots as
\begin{equation}
t_1^1(x, a^1) = \dfrac{-c^1(x, a^1) + \sqrt{\D(x, a^1)}}{2 d^1(x, a^1)}\text{ and } t_2^1(x, a^1) = \dfrac{-c^1(x, a^1) - \sqrt{\D(x, a^1)}}{2 d^1(x, a^1)},
\end{equation}
$\forall a^1 \in \A^1(x)~\forall x \in \S$. If $t_1^1(x, a^1) \geq t_2^1(x, a^1)$, it can be shown that the region allowed by this constraint is given by the interval $[ t_2^1(x, a^1), t_1^1(x, a^1) ]$ in the given direction $S$. Otherwise for $t_1^1(x, a^1) < t_2^1(x, a^1)$, the region allowed by this constraint on the real line is given by the interval $\left (-\infty ,t_1^1(x, a^1) \right ]~\bigcup~\left [t_2^1(x, a^1), \infty \right )$. Note that this implies that the constraint is not convex. The above explanation can be easily adapted for constraints on the second agent as well. Thus, feasible value ranges for $t$ imposed by the constraints \eqref{subeq:opt:2-player:agent1} and \eqref{subeq:opt:2-player:agent2} can be obtained. We formalize in Algorithm~\ref{algo:grad-tech:quadratic-constraint}, this process of computing feasible value ranges for $t$ imposed by quadratic constraints \eqref{eq:grad-tech:quadratic-constraint}.

\begin{nonfloatalgorithm}{Feasible $x$ from a Quadratic Constraint} 
\label{algo:grad-tech:quadratic-constraint}
\begin{algorithmic}
\INPUT $b$, $c$, $d$ - Coefficients of quadratic constraint $b + c x + d x^2 \leq 0$.
\OUTPUT $L$ - Feasible $x$ set
\vspace{1ex}
\hrule
\vspace{1ex}
\IF{$d = 0$}
	\IF{$c = 0$}
		\IF{$b > 0$}
			\STATE $L = \phi$
		\ELSE
			\STATE $L = \Re$
		\ENDIF
	\ELSIF{$c > 0$}
		\STATE $L = [-b/c, \infty)$
	\ELSE
		\STATE $L = (-\infty, -b/c]$
	\ENDIF
\ELSE
	\STATE $\D = c^2 - 4bd$
	\IF{$\D < 0$}
		\IF{$d \geq 0$}
			\STATE $L = \phi$
		\ELSE
			\STATE $L = \Re$
		\ENDIF
	\ELSE
		\STATE $x_1 = \dfrac{-c + \sqrt{\D}}{2d}$ \COMMENT{Upper limit, $x \leq x_1$} 
		\STATE $x_2 = \dfrac{-c - \sqrt{\D}}{2d}$ \COMMENT{Lower limit, $x \geq x_2$}
		\IF{$x_2 \leq x_1$}
			\STATE $L = [x_2, x_1]$
		\ELSE
			\STATE $L = (-\infty, x_1]\bigcup[x_2, \infty)$
		\ENDIF
	\ENDIF
\ENDIF
\end{algorithmic}
\end{nonfloatalgorithm}
% 
% {\small\begin{equation}
% \label{eq:bcd-secondconstraint}
% \left .\begin{array}{l}
% b^2(x, a^2) = \left [ r^2(x, a^2) + \beta \hspace{-1em}\sum \limits_{y \in U(x)}\hspace{-1em} P(y|x, a^2) v^2_0(y) \right ] \pi^1_0(x) - v^2_0(x), \vspace{1ex}\\
% c^2(x, a^2) = \left [ \beta \hspace{-1em}\sum \limits_{y \in U(x)}\hspace{-1em} P(y|x, a^2) S_v^2(y) \right ] \pi^1_0(x) - S_v^2(x) + \left [ r^2(x, a^2) + \beta\hspace{-1em} \sum \limits_{y \in U(x)}\hspace{-1em} P(y|x, a^2) v^2_0(y) \right ] S_\pi^1(x), \vspace{1ex}\\
% d^2(x, a^2) = \left [ \beta\hspace{-1em} \sum \limits_{y \in U(x)}\hspace{-1em} P(y|x, a^2) S_v^2(y) \right ] S_\pi^1(x),\\
% \end{array}\right \}
% \end{equation}}
% respectively.

Equality constraints \eqref{subeq:opt:2-player:agent1strategy} and \eqref{subeq:opt:2-player:agent2strategy}, on the values of $\pi$ are $\underline{1}_{m^i(x)}^T\pi^i(x) = 1~\forall x\in\S, i = 1,2$. On using $\pi = \pi_0 + t S_\pi$ and $\underline{1}_{m^i(x)}^T\pi^i_0(x) = 1, \forall x\in\S,$ we get, \[t \times \left [ \underline{1}_{m^i(x)}^TS_\pi^i(x) \right ] = 0, \forall x\in\S,\] which do not impose any condition on the value of $t$. However, the set of inequality constraints \eqref{subeq:opt:2-player:agent1prob} and \eqref{subeq:opt:2-player:agent2prob}, for non-negativity of strategy vectors, i.e., $\pi^i(x,a^j) \geq 0, \forall a^j \in \A^i(x), x \in \S, i = 1,2$ may impose an upper limit on the value of $t$. For example, consider $\pi^i(x,a^j) \geq 0$. Let $\pi_0^i(x, a^j)$ represent the current best value of the probability of picking action $a^j$ in state $x$ by agent $i$, and let $\delta_j$ represent the corresponding parameter computed in step \ref{step:grad-tech:herskovits:delta} of the Herskovits algorithm, (see Algorithm~\ref{algo:grad-tech:herskovits}). We wish to satisfy the condition \eqref{eq:grad-tech:herskovits:step-condition}, i.e., $\pi^i(x, a^j) \ge \delta_j \pi_0^i(x, a^j)$. Upon substituting $\pi^i(x, a^j) = \pi_0^i(x, a^j) + t S_\pi^i(x, a^j)$, we get, $\pi_0^i(x, a^j) + t S_\pi^i(x, a^j) \geq \delta_j \pi_0^i(x, a^j)$. If $S_\pi^i(x, a^j) < 0$, then, \[t \leq \dfrac{\pi_0^i(x, a^j) (1 - \delta_j)}{-S_\pi^i(x, a^j)}\text{ or } t \in \left (-\infty, \dfrac{\pi_0^i(x, a^j) (1 - \delta_j)}{-S_\pi^i(x, a^j)}\right ].\] 
Note that $t \geq 0$ is implicitly assumed, else while $S$ is a descent direction,
$t S$ would not be one. Thus, if $S_\pi^i(x, a^j) \geq 0$, we get $t \geq \dfrac{\pi_0^i(x, a^j) (1 - \delta_j)}{-S_\pi^i(x, a^j)}$ which does not impose any additional constraint on $t$ and hence can be ignored. Intersection of feasible regions given by all constraints in \eqref{subeq:opt:2-player:agent1}, \eqref{subeq:opt:2-player:agent2}, \eqref{subeq:opt:2-player:agent1prob} and \eqref{subeq:opt:2-player:agent2prob}, gives the feasible set of values for $t$, from which a suitable step length $t^* \geq 0$ is selected. The procedure for doing so is explained next.

\subsubsection[Selection of the optimal step length]{Selection of the optimal step length, $t^*$}

Along the chosen descent direction $S$, the objective function $f(\v, \pi)$ is a cubic function in the step length $t$. If the extreme points are real and both positive, then the first sub-point in the descent direction will be a minimum point and the next a maximum point as shown in Figure \ref{fig:cubic}. So, under this condition, the best point is obtained by finding the best among the minimum point (or two feasible points near the minimum point) and the maximum step length point which is decided by the constraints. Otherwise, the cubic curve would be like the dashed curve in Figure \ref{fig:cubic}. In such a case, the optimal step length is simply the maximum feasible step length.

%In Figure \ref{fig:cubic}\subref{subfig:cubicwithpoints}, the feasible intervals of constraints on the value $t$ have been marked using dash-dotted lines. The point on the curve corresponding to $t = 0$ represents the current point. As can be seen from the figure, the possible values for $t$ in the positive direction are not limited without constraints. The intersection of the intervals defined by the constraints lead to a step length of $0.1312902$. Similarly, in Figure \ref{fig:cubic}\subref{subfig:cubicwithoutpoints} where the cubic curve does not have extreme points, the optimal value of $t$ obtained by taking the maximum possible step length in the intersection of the intervals is $0.129122$.

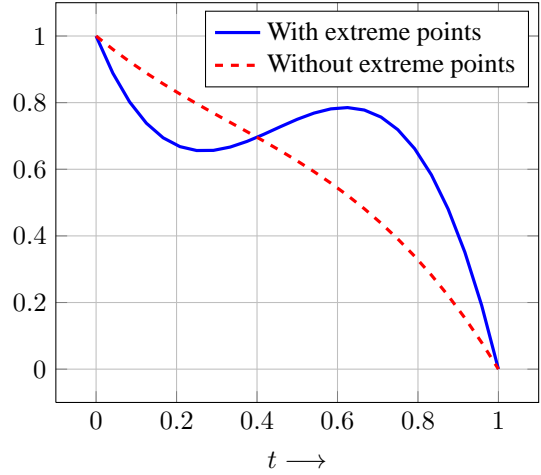
\begin{figure}
\centering
% \begin{tabular}{l}
%  \subfigure[With the two extreme points]{\label{subfig:cubicwithpoints}\includegraphics[scale=0.35]{../cubicwithpoints.pdf}}\\
%  \subfigure[Without any extreme points]{\label{subfig:cubicwithoutpoints}\includegraphics[scale=0.35]{../cubicwithoutpoints.pdf}}
% \end{tabular}
\begin{tikzpicture}
\begin{axis}[xlabel={$t\longrightarrow$},width=8cm,grid,no markers,tick scale binop={\times},legend style={ cells={anchor=west} }]
\addplot[color=blue,very thick,domain=0:1] {1 - 3*x + 8*x^2 - 6*x^3};
\addlegendentry{With extreme points}
\addplot[color=red,dashed,very thick,domain=0:1] {1 - x + x^2 - x^3};
\addlegendentry{Without extreme points}
\end{axis}
\end{tikzpicture}
 \caption{Cubic Curves (See Section \ref{subsubsect:step-length-constriants} for details)}
\label{fig:cubic}
\end{figure}

% %\begin{multicols}{1}
% \begin{figure}
%  %\centering
% \begin{tabular}{l}
%  \subfigure[With the two extreme points]{\label{subfig:cubicwithpoints}\includegraphics[scale=0.35]{cubicwithpoints.pdf}}\\
%  \subfigure[Without any extreme points]{\label{subfig:cubicwithoutpoints}\includegraphics[scale=0.35]{cubicwithoutpoints.pdf}}
% \end{tabular}
%  \caption{Cubic Curves (See Section \ref{sect:grad-tech:scheme:selection-optimal-step} for details)}
% \label{fig:cubic}
% \end{figure}
%\end{multicols}
%\FloatBarrier

\subsubsection{Optimal Step Length Algorithm}
\begin{nonfloatalgorithm}{Step Length Calculation}
\label{algo:grad-tech:steplength}
\begin{algorithmic}
\PARAM $\beta$: discount factor
\INPUT $(v_0, \pi_0)$: current value strategy pair
\INPUT $S$: selected descent direction 
\OUTPUT $t$: The best step length
\hrule\vspace{1ex}
\Step Calculate $d_1$, $d_2$ and $d_3$ using (\ref{eq:cubic-coeff}).
\Step $(t_1, t_2) \leftarrow roots(d_1, 2d_2, 3d_3)$

\Step $F \leftarrow \Re^+$, the set of all non-negative real numbers
\FOR{$x \in \S, a^1 \in \A^1(x), a^2 \in \A^2(x), i = 1, 2$}
\Step $F \leftarrow F \cap quadraticfeasible(b^i(x, a^i), c^i(x, a^i), d^i(x, a^i))$ where $b^i(x, a^i)$, $c^i(x, a^i)$, $d^i(x, a^i)$ are from (\ref{eq:bcd-firstconstraint}).
\Step $F \leftarrow F \cap \left [0, \dfrac{\pi_0^i(x, a^i)}{-S_\pi^i(x, a^i)}\right ]$ if $S_\pi^i(x, a^i) < 0$.
\ENDFOR
\Step If the extreme points $t_1$ and $t_2$ are real and both positive, the best step $t$ is obtained by finding the best amongst the minimum point $t_1$ (or two feasible points in $F$ near the minimum point) and the maximum step in $F$. Otherwise, the best step $t$ is the maximum step in $F$.
\end{algorithmic}
\end{nonfloatalgorithm}

In the above, $roots(a,b,c)$ gives the roots of $a + bx + cx^2 = 0$. Algorithm~\ref{algo:grad-tech:quadratic-constraint} is being referred to as $quadraticfeasible(a,b,c)$.

\subsection{The Complete Algorithm}\label{sect:grad-tech:scheme:algorithm}
With the schemes discussed in Sections \ref{sect:grad-tech:scheme:initial-point}, \ref{sect:grad-tech:scheme:sparsity} and \ref{sect:grad-tech:scheme:optimal-step}, we present the modified Herskovits algorithm.
\begin{nonfloatalgorithm}{The Complete Algorithm}
\label{algo:grad-tech:scheme}
\begin{algorithmic}
\PARAM $\beta$: discount factor
\INPUT $\pi_0$: initial strategy (from (\ref{eq:initial-pi}))
\OUTPUT $\left <v^*, \pi^*\right >$: An $\epsilon$-Nash equilibrium with $\epsilon = \dfrac{f(v^*, \pi^*)}{1 - \beta}$
\hrule\vspace{0.5ex}
\Step iteration $\leftarrow 1$. 
\Step $\widehat{\pi} \leftarrow \pi_0$.
\Step Compute $\widehat{v}$ from linear programs in (\ref{eq:grad-tech:value-only}) using only the first phase of the revised simplex method (see Section \ref{sect:grad-tech:scheme:initial-point}).
\LOOP
\Step Compute feasible direction $S$ using the two-stage feasible direction method (algorithm~\ref{algo:grad-tech:twostagefeasible}).
\Step Stop algorithm if $S = 0$. $\left <v^*, \pi^*\right > \leftarrow \left <\widehat{v}, \widehat{\pi}\right >$. Output $\left <v^*, \pi^*\right >$ and $\epsilon = \dfrac{f(v^*, \pi^*)}{1 - \beta}$. Terminate the algorithm.
\Step Compute the constrained optimal step length $t$ by the procedure described in Section \ref{sect:grad-tech:scheme:optimal-step}.
\Step Stop algorithm if $t = 0$. $\left <v^*, \pi^*\right > \leftarrow \left <\widehat{v}, \widehat{\pi}\right >$. Output $\left <v^*, \pi^*\right >$ and $\epsilon = \dfrac{f(v^*, \pi^*)}{1 - \beta}$. Terminate the algorithm.
\Step $\left <\widehat{v}, \widehat{\pi}\right > \leftarrow \left <\widehat{v}, \widehat{\pi}\right > + t S$
\Step iteration $\leftarrow$ iteration $+~1$.
\ENDLOOP
\end{algorithmic}
\end{nonfloatalgorithm}

Note that in the above algorithm, equality of $S$ to zero and also that of $t$ to zero are to be considered with a small error bound around zero to handle numerical issues. The computational complexity per iteration of the algorithm is $O(|\A|^3)$ multiplications contributed mainly from the steps involving formation and decomposition of the inner product matrix, $G$. However, the factor multiplying $|\A|^3$ can be shown to be far less than one in the actual implementation. %Refer to \cite[Section 3.8]{sg-self-tr} for detailed computational complexity analysis.

\subsection{Convergence to a KKT point}
\label{subsect:conv-kkt}
KKT conditions represent a set of necessary and sufficient conditions for a point to be a valid local minimum of an optimization problem. We write down the necessary conditions for a point $\left < v^*, \pi^* \right >$ to be a local minimum of the optimization problem \eqref{eq:opt:2-player-opt}:
\begin{equation}
\label{eq:grad-tech:kkt}
\left .\begin{array}{l}
\subequationitem\label{subeq:grad-tech:kkt-1} \nabla f(v^*, \pi^*) + \sum_{j = 1}^n \lambda_j \nabla g_j(v^*, \pi^*) = 0,\\
\subequationitem\label{subeq:grad-tech:kkt-slack} \lambda_j g_j(v^*, \pi^*) = 0, j = 1, 2, \dots, n,\\
\subequationitem g_j(v^*, \pi^*) \ge 0, j = 1, 2, \dots, n,\\
\subequationitem \lambda_j \ge 0, j = 1, 2, \dots, n,
\end{array}\right \}
\end{equation}
where $\lambda_j, j = 1, 2, \dots, n$, are the Lagrange multipliers associated with the constraints, $g_j(\v, \pi) \ge 0, j = 1, 2, \dots, n$. Let $I = \{j| g_j(\v, \pi) = 0\}$ be the set of active constraints. It can be easily shown that, the above set of conditions are sufficient as well if, the gradients of all active constraints form a linearly independent set. 

We note here that the entire proof of convergence to KKT point presented in \cite[Section~3]{twostagefeasible} for the unmodified Herskovits algorithm, can easily be seen to be applicable as it is, to the modified Herskovits algorithm, i.e., Algorithm~\ref{algo:grad-tech:scheme}. In the next section, we apply this algorithm to a simple terrain exploration problem, modelled as a general-sum discounted stochastic game, and observe in the simulations that the convergence is also to a Nash equilibrium. However, later in Section~\ref{sect:grad-tech:fallacies}, we show that in general, convergence to a KKT point is not sufficient to guarantee convergence to a Nash equilibrium.

\section{A Simple Terrain Exploration Problem}
\label{sect:grad-tech:terrain}
A simplified version of the general terrain exploration problem is presented below. Consider a pair of agents that are assigned the task of collecting a set of objects located at various positions in a terrain. We assume that the object positions are known {\em aproiri}. The game between the pair of agents terminates if all the objects are collected. The agent movements are considered to be stochastic. Modelling of this problem as a discounted stochastic game $\left < \S, \A, p, r, \beta \right >$ is described as follows.
\begin{inparaenum}[\\\bfseries (i)]
\item \textbf{State Space, $\S$} - Let the entire terrain be discretized into a grid structure defined by $S_G = G \times G$ where $G = \left \{0, \pm 1, \pm 2, \dots, \pm M\right \}.$ The position of an agent can be represented by a point in $S_G$. Let the position of the $i^{th}$ agent be denoted by $x^i \in S_G$ with $x^{i(1)}, x^{i(2)} \in G$ being its two co-ordinate components. So, the positional part of the overall state space considering the two agents, is given by $\S_p = S_G \times S_G.$ The status regarding whether a particular object is collected or not is also a part of the state space. So, the overall state space would be given by $\S' = \S_p \times \left \{0,1\right \}^K,$ where $K$ is the total number of objects to be collected from the terrain. Let $o_i$ represent the Boolean variable for the status of the $i^{th}$ object. Here $o_i = 0$ implies that the $i^{th}$ object is not yet collected and the opposite is true for $o_i = 1$. Thus, $x = \left < x^1, x^2, o_1, o_2, \dots, o_K \right > \in \S'$ where $x^i \in S_G$, $i = 1,2$. Let $B = \left \{ y \in S_G: \text{an object is located at }y \right \}$. The two sets $\S_1 = \left \{ x \in \S':  x^i \in B\text{ and } o_{x^i} = 0\text{ for some }i = 1,2 \right \}$ and $\S_2 = \left \{ x \in \S': o_j = 1\quad\forall j = 1 \text{ to } K \right \},$ represent those combinations of states which are not feasible. Thus, the actual state space containing only feasible states is $\S = \left [ \S'\backslash \left ( \S_1 \cup \S_2 \right ) \right ] \cup T$, where $T$ represents the terminal state of the game.
\item \textbf{Action Space, $\A$} - The action space of the \ith{i} agent can be defined as \[\A^i(x) = \left \{ \text{ Go to } y \in S_G : d_\infty(x^i,y) \leq 1 \right \},\]
where $x^i \in S_G$ is the position of the \ith{i} agent and $d_\infty(x^i,y) = max(|x^{i(1)} - y^{(1)}|,|x^{i(2)} - y^{(2)}|)$ is the $L^{\infty}$ distance metric. The aggregate action space of the two agents at state $x \in \S\backslash \{T\}$ is given by $\A(x) = \A^1(x) \times \A^2(x).$ Note that $x = \left <x^1, x^2, o_1, o_2,\dots, o_K\right >$. Thus, the action space does not depend upon the object state except for the termination state $T$. For the termination state $T$, the only action available is to stay in the termination state. The action related to the termination state $T$ is ignored in subsequent discussions.
\item \textbf{Transition Probability, $p(y|x,a)$} - The movements of each agent are assumed to be independent of other agents. The transition probability $p^i(y^i|x^i, a^i)$ for the \ith{i} agent is given by $p^i(y^i|x^i,a^i) = C(x^i)\ 2^{-d_1(a^i,y^i)}\ \forall y^i \in U^i(x^i) \subseteq S_G, i = 1,2,$ where $C(x^i) = \sum \limits_{y \in U^i(x^i)} 2^{-d_1(a^i,y)}$ is the normalization factor chosen to make this a probability measure and $d_1(a^i,y) = \left ( | a^{i(1)} - y^{(1)} | + | a^{i(2)} - y^{(2)} | \right ),$ the $L^1$ norm distance between $a^i$ and $y$. The joint transition probability is given by $p(y|x,a) = p^1(y^1|x^1,a^1) p^2(y^2|x^2,a^2)$.
\item \textbf{Reward function, $r(x,a)$} - To ensure that the two agents do not get to the same position, a penalty may be imposed on the two agents when they attain the same position. Thus, the stochastic reward function for the $i^{th}$ agent can be defined accordingly as
\begin{equation}
\overline{r}^i(x,a,y) = 
\left \{\begin{array}{ll}
-\frac{1}{2} & \text{if } y^i = y^j, j = 1, 2, j \neq i \text{ and } O_y \neq \left < 1, 1, \dots, 1 \right >, \\
1 & \text{if object present at $y^i$,}\\
0 & \text{otherwise,}
\end{array}\right .
\end{equation}
$i = 1, 2.$ The reward $r^{i}(x,a)$ is given by $r^i(x,a) = \sum \limits_{y \in \S} \overline{r}^i(x,a,y) p(y|x,a)$.
%\item \textbf{Discount Factor, $\beta$} takes values in the open interval $(0,1)$.
\end{inparaenum}

\subsection{Simulation Results}
Simulation results for $G = \left \{0, 1, 2, 3\right \}$ with two objects situated at $(0,3)$ and $(3,3)$ and discount factor $\beta = 0.75$ are described below. The parameters given to the two-stage feasible direction method are $w_j(v_0, \pi_0) = 1, j = 1,2,\dots, n$, $\alpha = 0.5$ and $\rho_0 = 0.9$.

\subsubsection{Objective Value}
The convergence of the objective value using Algorithm~\ref{algo:grad-tech:scheme}, to a value close to zero is shown in Figure \ref{fig:objective-value}. After getting an initial feasible solution, the objective value was $\approx 102.37$.

%The objective value evolution with iterations illustrates a peculiar behavior of our algorithm. Only alternate iterations registered a significant change in the objective value. So, the plot shown in Figure \ref{fig:objective-value} has a straggered decrease in value with number of iterations as highlighted in the inset of the figure.

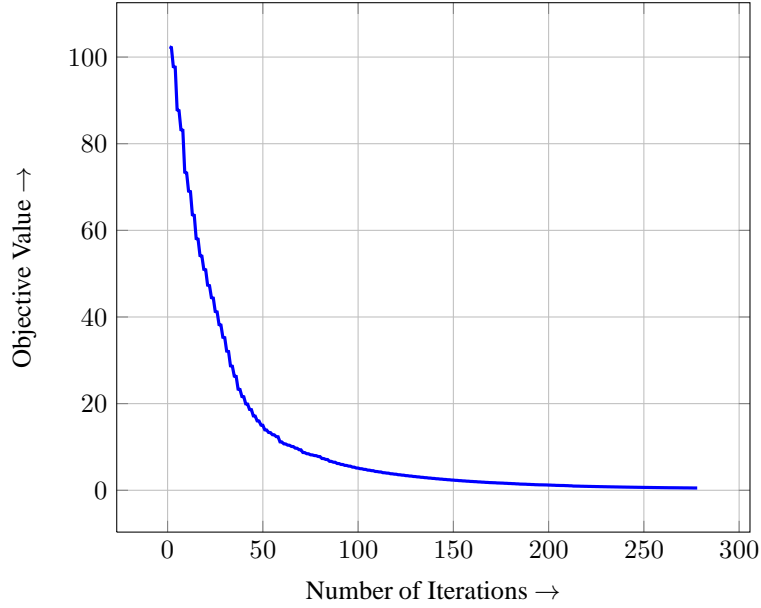
\begin{figure}
\centering
\begin{tikzpicture}

\begin{axis}[xlabel={Number of Iterations $\rightarrow$}, ylabel={Objective Value $\rightarrow$},width=10cm,grid,no markers,tick scale binop={\times}]
\addplot[color=blue,very thick] file {plots/objective4by4-2.data};
\end{axis}

% \begin{scope}[yshift=-10cm]
% \begin{scope}[xscale=0.05mm,yscale=0.1mm]
% \clip (20,50) circle (5);
% \draw (20,50) circle (5);
% \draw[thick] plot file {objective4by4-2.data};
% \end{scope}
% \end{scope}
% 
% \begin{scope}[xscale=0.01mm,yscale=0.02mm]
% \draw[->,thick] (80,60) -- (20,50);
% \end{scope}

\end{tikzpicture}
\caption{Objective Value vs. Number of Iterations}
\label{fig:objective-value}
\end{figure}

\subsubsection{Strategies}
The convergence behaviour of strategies of both agents with the initial position of the first agent being (2,1) and that of the second being (2,0), respectively, is shown in Figures \ref{fig:strategy-agent1} and \ref{fig:strategy-agent2} respectively. The arrows in the various grids in Figures \ref{fig:strategy-agent1} and \ref{fig:strategy-agent2} signify the feasible actions in each state and their lengths are proportional to the transition probabilities along the corresponding directions. With the given initial positions of agents and object locations, strategies pertaining only to those positions which an agent can visit with the other agent sticking to its own position are plotted. Consider for instance, Figure \ref{fig:strategy-agent1}. The figure shows the strategy of the first agent with the second agent sticking to the position (2,0). At the start of the algorithm, all transition probabilities are chosen according to the uniform distribution. In Figures \ref{fig:strategy-agent1} and \ref{fig:strategy-agent2}, we show the strategy profile of both the agents  after the $1^{\text{st}}$, \ith{11} and \ith{100} iterations, and upon convergence of the algorithm. The algorithm converged in a total of 278 iterations.

The Nash strategies have an interesting structure here which is evident in Figures \ref{fig:strategy-agent1} and \ref{fig:strategy-agent2} as well. The strategies are  deterministic except when both agents are in the vicinity of one another. This is expected from the structure of the reward function. Also, it is clear from Figures \ref{fig:strategy-agent1} and \ref{fig:strategy-agent2} that strategy components that are near to the two objects converge faster compared to those which are farther from the two objects. Note that strategy components of those positions which have no probability of being visited by an agent, with the agent being in a particular position, are not shown with arrow marks. For instance, in Figure \ref{fig:strategy-agent1}\subref{subfig:agent1-onconvergence}, position (1,1) has no probability of being visited by the first agent located at (2,1). 

\begin{figure}[h]
\begin{tabular}{cc}
\begin{nonfloatfigure}[0.48\textwidth]
\begin{tabular}{cc}
\subfigure[After First Iteration]{
\begin{tikzpicture}[scale=0.6,font=\footnotesize]
\draw[xshift=-0.5cm,yshift=-0.5cm] (0,0) grid (4,4);

\foreach \x in {0,1,2,3} {
	\draw (\x,-0.7) node {$\x$};
	\draw (-0.7,\x) node {$\x$};
}

% Objects
\draw[fill=black] (0,3) circle(2pt);
\draw[fill=black] (3,3) circle(2pt);

%Agents
\draw (2,0) node[fill=black,rectangle,inner sep=2pt] {};
\draw (2,1) node[fill=black,rectangle,inner sep=2pt] {};

%Strategy of first agent
\draw[->] (2, 1) -- +(225:0.347967);
\draw[->] (2, 1) -- +(180:0.407429);
\draw[->] (2, 1) -- +(135:0.5);
\draw[->] (2, 1) -- +(-90:0.335074);
\draw[->] (2, 1) -- +(90:0.491159);
\draw[->] (2, 1) -- +(-45:0.329515);
\draw[->] (2, 1) -- +(0:0.390542);
\draw[->] (2, 1) -- +(45:0.498495);
\draw[->] (1, 0) -- +(180:0.466296);
\draw[->] (1, 0) -- +(135:0.5);
\draw[->] (1, 0) -- +(90:0.462815);
\draw[->] (1, 0) -- +(0:0.412308);
\draw[->] (1, 0) -- +(45:0.448426);
\draw[->] (0, 0) -- +(90:0.5);
\draw[->] (0, 0) -- +(0:0.432251);
\draw[->] (0, 0) -- +(45:0.465645);
\draw[->] (0, 1) -- +(-90:0.374325);
\draw[->] (0, 1) -- +(90:0.5);
\draw[->] (0, 1) -- +(-45:0.342983);
\draw[->] (0, 1) -- +(0:0.397745);
\draw[->] (0, 1) -- +(45:0.49085);
\draw[->] (0, 2) -- +(-90:0.310645);
\draw[->] (0, 2) -- +(90:0.5);
\draw[->] (0, 2) -- +(-45:0.276008);
\draw[->] (0, 2) -- +(0:0.356931);
\draw[->] (0, 2) -- +(45:0.434366);
\draw[->] (1, 1) -- +(225:0.369609);
\draw[->] (1, 1) -- +(180:0.421808);
\draw[->] (1, 1) -- +(135:0.5);
\draw[->] (1, 1) -- +(-90:0.335077);
\draw[->] (1, 1) -- +(90:0.478285);
\draw[->] (1, 1) -- +(-45:0.318684);
\draw[->] (1, 1) -- +(0:0.376409);
\draw[->] (1, 1) -- +(45:0.474066);
\draw[->] (1, 2) -- +(225:0.318669);
\draw[->] (1, 2) -- +(180:0.404936);
\draw[->] (1, 2) -- +(135:0.5);
\draw[->] (1, 2) -- +(-90:0.282224);
\draw[->] (1, 2) -- +(90:0.426556);
\draw[->] (1, 2) -- +(-45:0.268526);
\draw[->] (1, 2) -- +(0:0.349024);
\draw[->] (1, 2) -- +(45:0.4043);
\draw[->] (1, 3) -- +(225:0.417581);
\draw[->] (1, 3) -- +(180:0.5);
\draw[->] (1, 3) -- +(-90:0.372611);
\draw[->] (1, 3) -- +(-45:0.357946);
\draw[->] (1, 3) -- +(0:0.385777);
\draw[->] (2, 2) -- +(225:0.287229);
\draw[->] (2, 2) -- +(180:0.361985);
\draw[->] (2, 2) -- +(135:0.411564);
\draw[->] (2, 2) -- +(-90:0.275597);
\draw[->] (2, 2) -- +(90:0.424944);
\draw[->] (2, 2) -- +(-45:0.283508);
\draw[->] (2, 2) -- +(0:0.385743);
\draw[->] (2, 2) -- +(45:0.5);
\draw[->] (2, 3) -- +(225:0.355495);
\draw[->] (2, 3) -- +(180:0.385977);
\draw[->] (2, 3) -- +(-90:0.363493);
\draw[->] (2, 3) -- +(-45:0.40343);
\draw[->] (2, 3) -- +(0:0.5);
\draw[->] (3, 2) -- +(225:0.260622);
\draw[->] (3, 2) -- +(180:0.346107);
\draw[->] (3, 2) -- +(135:0.416354);
\draw[->] (3, 2) -- +(-90:0.27393);
\draw[->] (3, 2) -- +(90:0.5);
\draw[->] (3, 1) -- +(225:0.335858);
\draw[->] (3, 1) -- +(180:0.394933);
\draw[->] (3, 1) -- +(135:0.486458);
\draw[->] (3, 1) -- +(-90:0.332553);
\draw[->] (3, 1) -- +(90:0.5);
\draw[->] (2, 0) -- +(180:0.459122);
\draw[->] (2, 0) -- +(135:0.5);
\draw[->] (2, 0) -- +(90:0.481219);
\draw[->] (2, 0) -- +(0:0.432579);
\draw[->] (2, 0) -- +(45:0.4728);
\draw[->] (3, 0) -- +(180:0.460792);
\draw[->] (3, 0) -- +(135:0.5);
\draw[->] (3, 0) -- +(90:0.496344);
\end{tikzpicture}} &

\subfigure[After \ith{11} Iteration]{
\begin{tikzpicture}[scale=0.6,font=\footnotesize]
\draw[xshift=-0.5cm,yshift=-0.5cm] (0,0) grid (4,4);

\foreach \x in {0,1,2,3} {
	\draw (\x,-0.7) node {$\x$};
	\draw (-0.7,\x) node {$\x$};
}

% Objects
\draw[fill=black] (0,3) circle(2pt);
\draw[fill=black] (3,3) circle(2pt);

%Agents
\draw (2,0) node[fill=black,rectangle,inner sep=2pt] {};
\draw (2,1) node[fill=black,rectangle,inner sep=2pt] {};

%Strategies
\draw[->] (2, 1) -- +(225:0.098497);
\draw[->] (2, 1) -- +(180:0.202918);
\draw[->] (2, 1) -- +(135:0.500000);
\draw[->] (2, 1) -- +(-90:0.081615);
\draw[->] (2, 1) -- +(90:0.459457);
\draw[->] (2, 1) -- +(-45:0.074571);
\draw[->] (2, 1) -- +(0:0.162614);
\draw[->] (2, 1) -- +(45:0.487879);
\draw[->] (1, 0) -- +(180:0.372636);
\draw[->] (1, 0) -- +(135:0.500000);
\draw[->] (1, 0) -- +(90:0.364697);
\draw[->] (1, 0) -- +(0:0.221302);
\draw[->] (1, 0) -- +(45:0.316543);
\draw[->] (0, 0) -- +(90:0.500000);
\draw[->] (0, 0) -- +(0:0.273767);
\draw[->] (0, 0) -- +(45:0.376103);
\draw[->] (0, 1) -- +(-90:0.137510);
\draw[->] (0, 1) -- +(90:0.500000);
\draw[->] (0, 1) -- +(-45:0.093927);
\draw[->] (0, 1) -- +(0:0.184669);
\draw[->] (0, 1) -- +(45:0.466818);
\draw[->] (0, 2) -- +(-90:0.054566);
\draw[->] (0, 2) -- +(90:0.500000);
\draw[->] (0, 2) -- +(0:0.108196);
\draw[->] (0, 2) -- +(45:0.262560);
\draw[->] (1, 2) -- +(225:0.063028);
\draw[->] (1, 2) -- +(180:0.195194);
\draw[->] (1, 2) -- +(135:0.500000);
\draw[->] (1, 2) -- +(90:0.248686);
\draw[->] (1, 2) -- +(0:0.098087);
\draw[->] (1, 2) -- +(45:0.190014);
\draw[->] (1, 3) -- +(225:0.228749);
\draw[->] (1, 3) -- +(180:0.500000);
\draw[->] (1, 3) -- +(-90:0.135354);
\draw[->] (1, 3) -- +(-45:0.111968);
\draw[->] (1, 3) -- +(0:0.157143);
\draw[->] (2, 2) -- +(180:0.109831);
\draw[->] (2, 2) -- +(135:0.200383);
\draw[->] (2, 2) -- +(90:0.231993);
\draw[->] (2, 2) -- +(0:0.148074);
\draw[->] (2, 2) -- +(45:0.500000);
\draw[->] (2, 3) -- +(225:0.104248);
\draw[->] (2, 3) -- +(180:0.153247);
\draw[->] (2, 3) -- +(-90:0.115888);
\draw[->] (2, 3) -- +(-45:0.188861);
\draw[->] (2, 3) -- +(0:0.500000);
\draw[->] (3, 2) -- +(180:0.086906);
\draw[->] (3, 2) -- +(135:0.213315);
\draw[->] (3, 2) -- +(90:0.500000);
\draw[->] (1, 1) -- +(225:0.129038);
\draw[->] (1, 1) -- +(180:0.235004);
\draw[->] (1, 1) -- +(135:0.500000);
\draw[->] (1, 1) -- +(-90:0.084385);
\draw[->] (1, 1) -- +(90:0.417070);
\draw[->] (1, 1) -- +(-45:0.067501);
\draw[->] (1, 1) -- +(0:0.143694);
\draw[->] (1, 1) -- +(45:0.402095);
\draw[->] (2, 0) -- +(180:0.348146);
\draw[->] (2, 0) -- +(135:0.500000);
\draw[->] (2, 0) -- +(90:0.426587);
\draw[->] (2, 0) -- +(0:0.271484);
\draw[->] (2, 0) -- +(45:0.395220);
\draw[->] (3, 0) -- +(180:0.356108);
\draw[->] (3, 0) -- +(135:0.500000);
\draw[->] (3, 0) -- +(90:0.490576);
\draw[->] (3, 1) -- +(225:0.081741);
\draw[->] (3, 1) -- +(180:0.173299);
\draw[->] (3, 1) -- +(135:0.439474);
\draw[->] (3, 1) -- +(-90:0.078976);
\draw[->] (3, 1) -- +(90:0.500000);
\end{tikzpicture}}\\

\subfigure[After \ith{100} Iteration]{
\begin{tikzpicture}[scale=0.6,font=\footnotesize]
\draw[xshift=-0.5cm,yshift=-0.5cm] (0,0) grid (4,4);

\foreach \x in {0,1,2,3} {
	\draw (\x,-0.7) node {$\x$};
	\draw (-0.7,\x) node {$\x$};
}

% Objects
\draw[fill=black] (0,3)	 circle(2pt);
\draw[fill=black] (3,3) circle(2pt);

%Agents
\draw (2,0) node[fill=black,rectangle,inner sep=2pt] {};
\draw (2,1) node[fill=black,rectangle,inner sep=2pt] {};

%Strategies
\draw[->] (2, 1) -- +(135:0.500000);
\draw[->] (2, 1) -- +(90:0.201638);
\draw[->] (2, 1) -- +(45:0.144380);
\draw[->] (1, 2) -- +(135:0.500000);
\draw[->] (2, 2) -- +(45:0.500000);
\draw[->] (3, 2) -- +(90:0.500000);
\end{tikzpicture}} &

\subfigure[On Convergence]{\label{subfig:agent1-onconvergence}
\begin{tikzpicture}[scale=0.6,font=\footnotesize]
\draw[xshift=-0.5cm,yshift=-0.5cm] (0,0) grid (4,4);

\foreach \x in {0,1,2,3} {
	\draw (\x,-0.7) node {$\x$};
	\draw (-0.7,\x) node {$\x$};
}

% Objects
\draw[fill=black] (0,3) circle(2pt);
\draw[fill=black] (3,3) circle(2pt);

%Agents
\draw (2,0) node[fill=black,rectangle,inner sep=2pt] {};
\draw (2,1) node[fill=black,rectangle,inner sep=2pt] {};

%Strategies
\draw[->] (2, 1) -- +(135:0.500000);
\draw[->] (2, 1) -- +(90:0.288682);
\draw[->] (2, 1) -- +(45:0.095299);
\draw[->] (1, 2) -- +(135:0.500000);
\draw[->] (2, 2) -- +(45:0.500000);
\draw[->] (3, 2) -- +(90:0.500000);
\end{tikzpicture}}
\end{tabular}

\caption{Convergence of the strategy updates of the first agent when it is located at (2,1) and the other agent is located at (2,0).}
\label{fig:strategy-agent1}
\end{nonfloatfigure}
& 
\begin{nonfloatfigure}[0.48\textwidth]
\begin{tabular}{cc}
\subfigure[After First Iteration]{
\begin{tikzpicture}[scale=0.6,font=\footnotesize]
\draw[xshift=-0.5cm,yshift=-0.5cm] (0,0) grid (4,4);

\foreach \x in {0,1,2,3} {
	\draw (\x,-0.7) node {$\x$};
	\draw (-0.7,\x) node {$\x$};
}

% Objects
\draw[fill=black] (0,3) circle(2pt);
\draw[fill=black] (3,3) circle(2pt);

%Agents
\draw (2,0) node[fill=black,rectangle,inner sep=2pt] {};
\draw (2,1) node[fill=black,rectangle,inner sep=2pt] {};

%Strategies
\draw[->] (2, 0) -- +(180:0.469659);
\draw[->] (2, 0) -- +(135:0.500000);
\draw[->] (2, 0) -- +(90:0.487390);
\draw[->] (2, 0) -- +(0:0.451465);
\draw[->] (2, 0) -- +(45:0.481921);
\draw[->] (1, 0) -- +(180:0.471288);
\draw[->] (1, 0) -- +(135:0.500000);
\draw[->] (1, 0) -- +(90:0.470687);
\draw[->] (1, 0) -- +(0:0.428630);
\draw[->] (1, 0) -- +(45:0.456432);
\draw[->] (0, 0) -- +(90:0.500000);
\draw[->] (0, 0) -- +(0:0.445820);
\draw[->] (0, 0) -- +(45:0.473878);
\draw[->] (0, 1) -- +(-90:0.393028);
\draw[->] (0, 1) -- +(90:0.500000);
\draw[->] (0, 1) -- +(-45:0.367049);
\draw[->] (0, 1) -- +(0:0.410663);
\draw[->] (0, 1) -- +(45:0.480511);
\draw[->] (0, 2) -- +(-90:0.317794);
\draw[->] (0, 2) -- +(90:0.500000);
\draw[->] (0, 2) -- +(-45:0.281756);
\draw[->] (0, 2) -- +(0:0.349590);
\draw[->] (0, 2) -- +(45:0.430250);
\draw[->] (1, 1) -- +(225:0.395045);
\draw[->] (1, 1) -- +(180:0.438277);
\draw[->] (1, 1) -- +(135:0.500000);
\draw[->] (1, 1) -- +(-90:0.366148);
\draw[->] (1, 1) -- +(90:0.469791);
\draw[->] (1, 1) -- +(-45:0.351683);
\draw[->] (1, 1) -- +(0:0.394070);
\draw[->] (1, 1) -- +(45:0.459258);
\draw[->] (1, 2) -- +(225:0.329097);
\draw[->] (1, 2) -- +(180:0.403952);
\draw[->] (1, 2) -- +(135:0.500000);
\draw[->] (1, 2) -- +(-90:0.291518);
\draw[->] (1, 2) -- +(90:0.423000);
\draw[->] (1, 2) -- +(-45:0.276749);
\draw[->] (1, 2) -- +(0:0.339004);
\draw[->] (1, 2) -- +(45:0.398405);
\draw[->] (1, 3) -- +(225:0.413102);
\draw[->] (1, 3) -- +(180:0.500000);
\draw[->] (1, 3) -- +(-90:0.359056);
\draw[->] (1, 3) -- +(-45:0.343387);
\draw[->] (1, 3) -- +(0:0.382357);
\draw[->] (2, 2) -- +(225:0.304229);
\draw[->] (2, 2) -- +(180:0.368915);
\draw[->] (2, 2) -- +(135:0.422658);
\draw[->] (2, 2) -- +(-90:0.292310);
\draw[->] (2, 2) -- +(90:0.429042);
\draw[->] (2, 2) -- +(-45:0.299969);
\draw[->] (2, 2) -- +(0:0.380422);
\draw[->] (2, 2) -- +(45:0.500000);
\draw[->] (2, 1) -- +(225:0.391903);
\draw[->] (2, 1) -- +(180:0.435884);
\draw[->] (2, 1) -- +(135:0.500000);
\draw[->] (2, 1) -- +(-90:0.379798);
\draw[->] (2, 1) -- +(90:0.485271);
\draw[->] (2, 1) -- +(-45:0.373946);
\draw[->] (2, 1) -- +(0:0.417049);
\draw[->] (2, 1) -- +(45:0.486270);
\draw[->] (3, 0) -- +(180:0.471378);
\draw[->] (3, 0) -- +(135:0.500000);
\draw[->] (3, 0) -- +(90:0.497893);
\draw[->] (3, 1) -- +(225:0.384865);
\draw[->] (3, 1) -- +(180:0.428415);
\draw[->] (3, 1) -- +(135:0.492329);
\draw[->] (3, 1) -- +(-90:0.381412);
\draw[->] (3, 1) -- +(90:0.500000);
\draw[->] (3, 2) -- +(225:0.273920);
\draw[->] (3, 2) -- +(180:0.342042);
\draw[->] (3, 2) -- +(135:0.417490);
\draw[->] (3, 2) -- +(-90:0.287127);
\draw[->] (3, 2) -- +(90:0.500000);
\draw[->] (2, 3) -- +(225:0.371456);
\draw[->] (2, 3) -- +(180:0.404734);
\draw[->] (2, 3) -- +(-90:0.360398);
\draw[->] (2, 3) -- +(-45:0.390395);
\draw[->] (2, 3) -- +(0:0.500000);
\end{tikzpicture}}
&
\subfigure[After \ith{11} Iteration]{
\begin{tikzpicture}[scale=0.6,font=\footnotesize]
\draw[xshift=-0.5cm,yshift=-0.5cm] (0,0) grid (4,4);

\foreach \x in {0,1,2,3} {
	\draw (\x,-0.7) node {$\x$};
	\draw (-0.7,\x) node {$\x$};
}

% Objects
\draw[fill=black] (0,3) circle(2pt);
\draw[fill=black] (3,3) circle(2pt);

%Agents
\draw (2,0) node[fill=black,rectangle,inner sep=2pt] {};
\draw (2,1) node[fill=black,rectangle,inner sep=2pt] {};

%Strategies
\draw[->] (2, 0) -- +(180:0.394469);
\draw[->] (2, 0) -- +(135:0.500000);
\draw[->] (2, 0) -- +(90:0.454487);
\draw[->] (2, 0) -- +(0:0.340562);
\draw[->] (2, 0) -- +(45:0.435178);
\draw[->] (1, 0) -- +(180:0.394016);
\draw[->] (1, 0) -- +(135:0.500000);
\draw[->] (1, 0) -- +(90:0.395633);
\draw[->] (1, 0) -- +(0:0.272818);
\draw[->] (1, 0) -- +(45:0.347733);
\draw[->] (0, 0) -- +(90:0.500000);
\draw[->] (0, 0) -- +(0:0.318467);
\draw[->] (0, 0) -- +(45:0.407258);
\draw[->] (0, 1) -- +(-90:0.177499);
\draw[->] (0, 1) -- +(90:0.500000);
\draw[->] (0, 1) -- +(-45:0.134497);
\draw[->] (0, 1) -- +(0:0.217797);
\draw[->] (0, 1) -- +(45:0.423887);
\draw[->] (0, 2) -- +(-90:0.062483);
\draw[->] (0, 2) -- +(90:0.500000);
\draw[->] (0, 2) -- +(0:0.098425);
\draw[->] (0, 2) -- +(45:0.251479);
\draw[->] (1, 2) -- +(225:0.074313);
\draw[->] (1, 2) -- +(180:0.193422);
\draw[->] (1, 2) -- +(135:0.500000);
\draw[->] (1, 2) -- +(90:0.239639);
\draw[->] (1, 2) -- +(0:0.085824);
\draw[->] (1, 2) -- +(45:0.177794);
\draw[->] (1, 3) -- +(225:0.217660);
\draw[->] (1, 3) -- +(180:0.500000);
\draw[->] (1, 3) -- +(-90:0.113341);
\draw[->] (1, 3) -- +(-45:0.091397);
\draw[->] (1, 3) -- +(0:0.149931);
\draw[->] (2, 2) -- +(180:0.123500);
\draw[->] (2, 2) -- +(135:0.229741);
\draw[->] (2, 2) -- +(90:0.242253);
\draw[->] (2, 2) -- +(0:0.137958);
\draw[->] (2, 2) -- +(45:0.500000);
\draw[->] (2, 3) -- +(225:0.129437);
\draw[->] (2, 3) -- +(180:0.192311);
\draw[->] (2, 3) -- +(-90:0.110324);
\draw[->] (2, 3) -- +(-45:0.160586);
\draw[->] (2, 3) -- +(0:0.500000);
\draw[->] (3, 2) -- +(180:0.081276);
\draw[->] (3, 2) -- +(135:0.213849);
\draw[->] (3, 2) -- +(90:0.500000);
\draw[->] (1, 1) -- +(225:0.180701);
\draw[->] (1, 1) -- +(180:0.283138);
\draw[->] (1, 1) -- +(135:0.500000);
\draw[->] (1, 1) -- +(-90:0.132966);
\draw[->] (1, 1) -- +(90:0.386728);
\draw[->] (1, 1) -- +(-45:0.112394);
\draw[->] (1, 1) -- +(0:0.181853);
\draw[->] (1, 1) -- +(45:0.350118);
\draw[->] (2, 1) -- +(225:0.175269);
\draw[->] (2, 1) -- +(180:0.276010);
\draw[->] (2, 1) -- +(135:0.500000);
\draw[->] (2, 1) -- +(-90:0.152260);
\draw[->] (2, 1) -- +(90:0.434016);
\draw[->] (2, 1) -- +(-45:0.141468);
\draw[->] (2, 1) -- +(0:0.223754);
\draw[->] (2, 1) -- +(45:0.432956);
\draw[->] (3, 0) -- +(180:0.402993);
\draw[->] (3, 0) -- +(135:0.500000);
\draw[->] (3, 0) -- +(90:0.490961);
\draw[->] (3, 1) -- +(225:0.170986);
\draw[->] (3, 1) -- +(180:0.264798);
\draw[->] (3, 1) -- +(135:0.469025);
\draw[->] (3, 1) -- +(-90:0.164468);
\draw[->] (3, 1) -- +(90:0.500000);
\end{tikzpicture}}\\
\subfigure[After \ith{100} Iteration]{
\begin{tikzpicture}[scale=0.6,font=\footnotesize]
\draw[xshift=-0.5cm,yshift=-0.5cm] (0,0) grid (4,4);

\foreach \x in {0,1,2,3} {
	\draw (\x,-0.7) node {$\x$};
	\draw (-0.7,\x) node {$\x$};
}

% Objects
\draw[fill=black] (0,3) circle(2pt);
\draw[fill=black] (3,3) circle(2pt);

%Agents
\draw (2,0) node[fill=black,rectangle,inner sep=2pt] {};
\draw (2,1) node[fill=black,rectangle,inner sep=2pt] {};

%Strategies
\draw[->] (2, 0) -- +(180:0.120312);
\draw[->] (2, 0) -- +(135:0.500000);
\draw[->] (2, 0) -- +(90:0.410680);
\draw[->] (2, 0) -- +(0:0.095422);
\draw[->] (2, 0) -- +(45:0.463198);
\draw[->] (1, 0) -- +(180:0.091659);
\draw[->] (1, 0) -- +(135:0.500000);
\draw[->] (1, 0) -- +(90:0.156055);
\draw[->] (0, 0) -- +(90:0.500000);
\draw[->] (0, 0) -- +(45:0.199418);
\draw[->] (0, 1) -- +(90:0.500000);
\draw[->] (0, 1) -- +(45:0.113069);
\draw[->] (0, 2) -- +(90:0.500000);
\draw[->] (1, 2) -- +(135:0.500000);
\draw[->] (1, 1) -- +(135:0.500000);
\draw[->] (1, 1) -- +(90:0.072345);
\draw[->] (2, 1) -- +(135:0.500000);
\draw[->] (2, 1) -- +(90:0.232972);
\draw[->] (2, 1) -- +(45:0.285958);
\draw[->] (2, 2) -- +(45:0.500000);
\draw[->] (3, 2) -- +(90:0.500000);
\draw[->] (3, 0) -- +(180:0.141185);
\draw[->] (3, 0) -- +(135:0.500000);
\draw[->] (3, 0) -- +(90:0.442526);
\draw[->] (3, 1) -- +(135:0.285035);
\draw[->] (3, 1) -- +(90:0.500000);
\end{tikzpicture}}
&
\subfigure[On Convergence]{
\begin{tikzpicture}[scale=0.6,font=\footnotesize]
\draw[xshift=-0.5cm,yshift=-0.5cm] (0,0) grid (4,4);

\foreach \x in {0,1,2,3} {
	\draw (\x,-0.7) node {$\x$};
	\draw (-0.7,\x) node {$\x$};
}

% Objects
\draw[fill=black] (0,3) circle(2pt);
\draw[fill=black] (3,3) circle(2pt);

%Agents
\draw (2,0) node[fill=black,rectangle,inner sep=2pt] {};
\draw (2,1) node[fill=black,rectangle,inner sep=2pt] {};

%Strategies
\draw[->] (2, 0) -- +(135:0.500000);
\draw[->] (2, 0) -- +(90:0.214171);
\draw[->] (2, 0) -- +(45:0.439084);
\draw[->] (1, 1) -- +(135:0.500000);
\draw[->] (0, 2) -- +(90:0.500000);
\draw[->] (2, 1) -- +(135:0.500000);
\draw[->] (2, 1) -- +(90:0.072156);
\draw[->] (2, 1) -- +(45:0.378276);
\draw[->] (1, 2) -- +(135:0.500000);
\draw[->] (2, 2) -- +(45:0.500000);
\draw[->] (3, 2) -- +(90:0.500000);
\draw[->] (3, 1) -- +(90:0.500000);
\end{tikzpicture}}
\end{tabular}

\caption{Convergence of the strategy updates of the second agent when it is located at (2,0) and the other agent is located at (2,1).}
\label{fig:strategy-agent2}
\end{nonfloatfigure}
\end{tabular}
\end{figure}
%\FloatBarrier

\section{Non-Convergence to a Nash Equilibrium}
\label{sect:grad-tech:fallacies}
Theorem \ref{theorem:opt:fv2} showed that it is both necessary and sufficient for a feasible point $\left <\v^*, \pi^* \right >$ to correspond to a Nash equilibrium, if the objective value, $f(\v^*, \pi^*) = 0$. However, for a gradient-based scheme, it would be apt to have conditions represented in terms of gradients of the objective and constraints. In this direction, we now present a series of results which ultimately give the desired set of necessary and sufficient conditions for a minimum point to be a global minimum. For a given point $\left <\v, \pi \right >$, let $G = [ \nabla g_j(\v, \pi) : j = 1, 2, \dots, N ]$ represent a matrix whose columns are gradients of all the constraints \eqref{eq:grad-tech:2-player-opt-inequality}.

\begin{proposition}
\label{propos:grad-tech:objective-gradient}
At any given point $\left <\v, \pi \right >$, the gradient of the objective function $f(\v, \pi)$, can be expressed as a linear combination of the gradient of all the constraints \eqref{eq:grad-tech:2-player-opt-inequality}. In other words, $\nabla f(\v, \pi) = G \lambda'$ where $\lambda'$ is an appropriate vector.
\end{proposition}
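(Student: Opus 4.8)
The plan is to produce explicit coefficients realizing $\nabla f = G\lambda'$ by exploiting the multiplicative structure that the objective inherits from the constraints. The starting point is an exact algebraic identity for the objective. Write the marginal constraint functions as $g^i_{x,a^i}(\v,\pi) = Q^i(x,a^i) - v^i(x)$, where $Q^i(x,a^i)$ denotes the $a^i$-component of the left-hand vector appearing in constraints \eqref{subeq:opt:2-player:agent1}--\eqref{subeq:opt:2-player:agent2}, i.e. $Q^i(x,a^i)=\pi^{-i}(x)^T[\,\r^i(x,a^i,\cdot)+\beta\sum_y P(y|x,a^i,\cdot)v^i(y)\,]$. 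I would first establish
\[
 f(\v,\pi) = -\sum_{i=1}^2 \sum_{x\in\S} \sum_{a^i\in\A^i(x)} \pi^i(x,a^i)\, g^i_{x,a^i}(\v,\pi).
\]
This follows by expanding the terms $\r^i(\pi)$ and $P(\pi)\v^i$ of the objective as convex combinations over the own action $a^i$, and then using $\sum_{a^i}\pi^i(x,a^i)=1$ to rewrite $v^i(x)$ as $\sum_{a^i}\pi^i(x,a^i)v^i(x)$; each summand collapses to $\pi^i(x,a^i)\big(v^i(x)-Q^i(x,a^i)\big)$, giving the identity.

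The key structural observation is that $g^i_{x,a^i}$ depends only on $\v^i$ and the opponent strategy $\pi^{-i}$, and in particular not on $\pi^i$. Applying the product rule to the identity above, the coefficient $\pi^i(x,a^i)$ and the factor $g^i_{x,a^i}$ therefore have disjoint gradient supports, and I obtain
\[
 \nabla f = -\sum_{i=1}^2\sum_{x,a^i} \pi^i(x,a^i)\,\nabla g^i_{x,a^i} \;-\; \sum_{i=1}^2\sum_{x,a^i} g^i_{x,a^i}\,\nabla \pi^i(x,a^i).
\]
The first double sum is already a linear combination of the gradients of the constraints \eqref{subeq:opt:2-player:agent1}--\eqref{subeq:opt:2-player:agent2}, with coefficients $-\pi^i(x,a^i)$. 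For the second sum I would note that $\nabla\pi^i(x,a^i)$ is simply the standard basis vector in the $\pi^i(x,a^i)$ coordinate, which equals $-\nabla\big(-\pi^i(x,a^i)\big)$, i.e. the negative gradient of the non-negativity constraint on that strategy component. Hence each residual term is a scalar multiple (namely $g^i_{x,a^i}$) of a non-negativity constraint gradient, and collecting all coefficients yields the required vector $\lambda'$.

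The one point requiring care, and the main obstacle, is the elimination of the equality constraints used to pass from \eqref{eq:opt:2-player-opt} to \eqref{eq:grad-tech:2-player-opt-inequality}: for each state the component $\pi^i(x,a^i(x))$ is not an independent variable, so neither its basis vector nor a non-negativity constraint on it is available in the reduced problem, and the naive absorption above would fail for that one term per state. I would handle this by observing that the substitution $\pi^i(x,a^i(x)) = 1-\sum_{\overline{a}^i\neq a^i(x)}\pi^i(x,\overline{a}^i)$ induces a fixed \emph{linear} map on gradients (the chain rule), applied identically to $f$ and to every constraint $g_j$. Under this map the non-negativity constraint on the eliminated action $a^i(x)$ is carried exactly to the simplex constraint \eqref{subeq:opt:2-player:agent1strategy}--\eqref{subeq:opt:2-player:agent2strategy}, since $\pi^i(x,a^i(x))\ge 0 \Leftrightarrow \sum_{\overline{a}^i\neq a^i(x)}\pi^i(x,\overline{a}^i)\le 1$. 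Because a linear map preserves linear combinations, the identity $\nabla f = G\lambda'$ established in the full variable space descends to the reduced space of \eqref{eq:grad-tech:2-player-opt-inequality}, with the residual terms for eliminated actions absorbed by the simplex-constraint gradients rather than by non-negativity gradients. This completes the plan.
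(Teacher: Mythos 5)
Your proof is correct and follows essentially the same route as the paper: you establish the identity $f(\v,\pi) = -\sum_{i,x,a^i}\pi^i(x,a^i)\,h_i(x,a^i)$ (the paper's equation \eqref{eq:grad-tech:objective-product-sum}) and then apply the product rule, reading off the coefficients $-\pi^i(x,a^i)$ on the gradients of \eqref{subeq:opt:2-player:agent1}--\eqref{subeq:opt:2-player:agent2} and $-h_i(x,a^i)$ on the gradients of the strategy constraints. Your treatment of the eliminated variables $\pi^i(x,a^i(x))$ --- showing that their residual terms are absorbed by the simplex-constraint gradients under the chain rule --- is a point the paper passes over with only a brief remark, so your version is in fact the more complete one.
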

\begin{proof}
Let $h_1(x, a^1) = \pi^2(x)^T \left [ \r^1(x, a^1, \A^2(x)) + \beta \sum \limits_{y \in U(x)} P(y|x, a^1, \A^2(x)) v^1(y) \right ] - v^1(x)$. Then, $h_1(x, a^1) \le 0$ represents the set of constraints \eqref{subeq:opt:2-player:agent1}. Similarly, let $h_2(x, a^2) = \left [ \r^2(x, \A^1(x), a^2) + \beta \sum \limits_{y \in U(x)} P(y|x, \A^1(x), a^2) v^2(y) \right ] \pi^1(x) - v^2(x)$. Thus, $h_2(x, a^2) \le 0$ represents the set of constraints \eqref{subeq:opt:2-player:agent2}. Now, we observe that the objective of the optimization problem \eqref{eq:opt:2-player-opt} can be re-expressed in terms of $h_i(x, a^i)$ and $\pi^i(x, a^i), i = 1, 2$, as follows:
\begin{equation}
\label{eq:grad-tech:objective-product-sum}
f(\v, \pi) = \sum_{i = 1}^2 \sum_{x \in \S} -\pi^i(x, a^i) h_i(x, a^i).
\end{equation}
So, the objective can be visualized as the sum of products between LHS of \eqref{subeq:opt:2-player:agent1} and that of corresponding constraints in  \eqref{subeq:opt:2-player:agent1prob}. Note that the equality constraints are easily eliminated as expressed in \eqref{eq:grad-tech:2-player-opt-inequality}. Thus, all the constraints of interest are inequality constraints which pair up, one from \eqref{subeq:opt:2-player:agent1}-\eqref{subeq:opt:2-player:agent2} and the other from \eqref{subeq:opt:2-player:agent1prob}-\eqref{subeq:opt:2-player:agent2prob}. It is now easy to see the desired result by considering the chain-rule of differentiation.
\end{proof}

Note that the vector $\lambda'$ discussed in the proposition \ref{propos:grad-tech:objective-gradient}, is in value the same as the negative of the pair constraint. For instance, let for some $j$, $g_j(\v, \pi) = h^i(x, a^i)$. Then, $\lambda_j' = -\pi^i(x, a^i)$. Similarly, for some $j$ for which $g_j(\v, \pi) = \pi^i(x, a^i)$, we have $\lambda_j' = -h^i(x, a^i)$. 

Let $\lambda' = \left [ \begin{array}{c} \lambda_I' \\ \lambda_K' \end{array} \right ]$, where $\lambda_I'$ is the part of $\lambda'$ corresponding to active constraints and $\lambda_K'$ is that corresponding to inactive constraints. Similarly, let the set of Lagrange multipliers, $\lambda = \left [ \begin{array}{c} \lambda_I \\ \lambda_K \end{array} \right ]$, where $\lambda_I$ is the part of $\lambda$ corresponding to active constraints and $\lambda_K$ is that corresponding to inactive constraints.

\begin{lemma}
Under assumption (\ref{asm:grad-tech:active-set-gradients}) of Section~\ref{sect:grad-tech:herskovits}, if $\lambda_K' = 0$ at a KKT point $\left <\v^*, \pi^* \right >$, then $\lambda_I = -\lambda_I'$.
\end{lemma}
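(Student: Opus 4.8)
The plan is to combine the representation of $\nabla f$ supplied by Proposition~\ref{propos:grad-tech:objective-gradient} with the KKT stationarity and complementary-slackness conditions, and then to exploit the linear independence of the active-constraint gradients to strip off the common matrix factor. The whole argument is essentially linear algebra once the two expressions for $\nabla f(\v^*, \pi^*)$ are lined up.

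First I would partition the constraint-gradient matrix conformally with the splitting of the multiplier vectors, writing $G = [\,G_I\ \ G_K\,]$, where $G_I$ collects the gradients of the active constraints and $G_K$ those of the inactive ones. Proposition~\ref{propos:grad-tech:objective-gradient} then reads
\[
\nabla f(\v^*, \pi^*) = G\lambda' = G_I \lambda_I' + G_K \lambda_K',
\]
and substituting the hypothesis $\lambda_K' = 0$ collapses this to $\nabla f(\v^*, \pi^*) = G_I \lambda_I'$.

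Next I would bring in the KKT conditions~\eqref{eq:grad-tech:kkt}. Complementary slackness~\eqref{subeq:grad-tech:kkt-slack} forces $\lambda_j = 0$ whenever $g_j(\v^*, \pi^*) \neq 0$, i.e.\ for every inactive constraint; hence $\lambda_K = 0$. The stationarity condition~\eqref{subeq:grad-tech:kkt-1} therefore reduces to $\nabla f(\v^*, \pi^*) = -G\lambda = -G_I \lambda_I$. Equating the two expressions for $\nabla f(\v^*, \pi^*)$ yields $G_I \lambda_I' = -G_I \lambda_I$, that is,
\[
G_I(\lambda_I' + \lambda_I) = 0.
\]

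Finally, assumption~\eqref{asm:grad-tech:active-set-gradients} guarantees that the columns of $G_I$ — the gradients of the active constraints — are linearly independent, so $G_I$ has trivial kernel and $G_I z = 0$ implies $z = 0$. Applying this with $z = \lambda_I' + \lambda_I$ gives $\lambda_I' + \lambda_I = 0$, i.e.\ $\lambda_I = -\lambda_I'$, as claimed. I expect the only delicate point to be the bookkeeping: ensuring that the active/inactive partition of $G$, of $\lambda'$, and of $\lambda$ is carried out against a single common index set, and confirming that complementary slackness indeed annihilates the whole inactive block $\lambda_K$ before the independence argument is invoked — without that, the cancellation of $G_I$ would not be legitimate.
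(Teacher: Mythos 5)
Your proposal is correct and follows essentially the same route as the paper: both combine $\nabla f = G\lambda'$ from Proposition~\ref{propos:grad-tech:objective-gradient} with the KKT stationarity condition to get $G(\lambda+\lambda')=0$, use $\lambda_K=0$ (complementary slackness) together with the hypothesis $\lambda_K'=0$ to reduce this to a statement about the active block, and then invoke assumption~(\ref{asm:grad-tech:active-set-gradients}). The only cosmetic difference is that you apply the injectivity of $G_I$ directly, whereas the paper first multiplies by $G^T$ and argues via the invertibility of $G_I^T G_I$; these are equivalent.
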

\begin{proof}
Let $G = \left [ G_I \quad G_K \right ]$ be the previously defined matrix of gradients of all constraints, where $G_I$ is the part of matrix $G$ containing gradients of all active constraints and $G_K$ that containing gradients of all inactive constraints. Now, from proposition \ref{propos:grad-tech:objective-gradient}, we have that $\nabla f(\v^*, \pi^*) = G \lambda'$. From the KKT conditions \eqref{eq:grad-tech:kkt}, we have $\nabla f(\v^*, \pi^*) = -G \lambda$. Combining the two, we get $G (\lambda + \lambda') = 0$. Since $G$ is full-rank from assumption \eqref{asm:grad-tech:active-set-gradients}, we have \[G^T G (\lambda + \lambda') = 0.\]  This can be re-written as 
\[
\left [ \begin{array}{cc}
G_I^T G_I & G_I^T G_K \\
G_K^T G_I & G_K^T G_K
\end{array} \right ] \left [
\begin{array}{c}
\lambda_I + \lambda_I' \\
\lambda_K + \lambda_K'
\end{array} \right ] = 0.
\]
In other words, we have a set of simultaneous equations as follows:
\begin{eqnarray}
G_I^T G_I (\lambda_I + \lambda_I') + G_I^T G_K (\lambda_K + \lambda_K') & = & 0, \label{eq:grad-tech:lambda-condition-1}\\
G_K^T G_I (\lambda_I + \lambda_I') + G_K^T G_K (\lambda_K + \lambda_K') & = & 0. \label{eq:grad-tech:lambda-condition-2}
\end{eqnarray}

Note that at a KKT point, it is easy to see that $\lambda_K = 0$. Also, we have $\lambda_K' = 0$. So, from \eqref{eq:grad-tech:lambda-condition-1}, we have,
\[ G_I^T G_I (\lambda_I + \lambda_I') = 0.\]
Since $G_I$ is of full rank, $G_I^T G_I$ is invertible. Hence, the result.
\end{proof}

\begin{corollary}
\label{corollary:grad-tech:lambda-equal-lambda'}
Under assumption (\ref{asm:grad-tech:active-set-gradients}) of Section~\ref{sect:grad-tech:herskovits}, if $\lambda_K' = 0$ at a KKT point $\left <\v^*, \pi^* \right >$, then $\lambda = -\lambda'$.
\end{corollary}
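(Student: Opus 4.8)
The plan is to verify the vector identity $\lambda = -\lambda'$ block by block, using the decomposition of $\lambda$ and $\lambda'$ into their active parts ($\lambda_I$ and $\lambda_I'$) and inactive parts ($\lambda_K$ and $\lambda_K'$) introduced just before the preceding Lemma. It then suffices to establish $\lambda_I = -\lambda_I'$ and $\lambda_K = -\lambda_K'$ separately, since stacking the two block equalities recovers the full claim.

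For the active block I would simply invoke the preceding Lemma. Its hypotheses---assumption~(\ref{asm:grad-tech:active-set-gradients}) of Section~\ref{sect:grad-tech:herskovits} together with $\lambda_K' = 0$ at the KKT point $\left <\v^*, \pi^* \right >$---coincide exactly with the hypotheses of the corollary, and its conclusion is precisely $\lambda_I = -\lambda_I'$. So this block requires no new argument.

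For the inactive block I would appeal to complementary slackness. The KKT conditions \eqref{eq:grad-tech:kkt} include $\lambda_j g_j(\v^*, \pi^*) = 0$ for every $j$; for an inactive constraint one has $g_j(\v^*, \pi^*) \neq 0$, which forces the associated multiplier to vanish, so that $\lambda_K = 0$ (this fact was already noted in the proof of the Lemma). The hypothesis of the corollary supplies $\lambda_K' = 0$ as well, whence $\lambda_K = 0 = -\lambda_K'$.

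There is essentially no obstacle here: the corollary is a repackaging of the Lemma with the standard complementary-slackness observation. The only point deserving care is the justification that $\lambda_K = 0$ at a KKT point, which rests on $g_j \neq 0$ off the active set; once this is granted, combining the two block identities gives $\lambda = -\lambda'$, as desired.
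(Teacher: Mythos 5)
Your argument is correct and matches the paper's own (one-line) proof exactly: the preceding Lemma handles the active block, and complementary slackness gives $\lambda_K = 0$ so the inactive block reduces to the hypothesis $\lambda_K' = 0$. You have merely spelled out the same reasoning in more detail.
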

\begin{proof}
At a KKT point, $\lambda_K = 0$. Thus, the result follows.
\end{proof}

\begin{theorem}
\label{theorem:grad-tech:lambda-zero-nash}
A KKT point $\left < \v^*, \pi^* \right >$ corresponds to a Nash equilibrium of the underlying general-sum stochastic game, if and only if $\lambda_K' = 0$.
\end{theorem}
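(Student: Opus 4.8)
The plan is to reduce the Nash condition to the single equation $f(\v^*,\pi^*)=0$, and then to prove that, at a feasible point, $f(\v^*,\pi^*)=0$ holds if and only if $\lambda_K'=0$. For the first reduction I would invoke Theorem~\ref{theorem:opt:fv2} together with the non-negativity of the objective on the feasible region (which the product-sum form below makes manifest): since a KKT point is feasible and $f\ge 0$ there, $f(\v^*,\pi^*)=0$ means the point is a global minimum of value zero, so by Theorem~\ref{theorem:opt:fv2} the strategy $\pi^*$ is Nash precisely when $f(\v^*,\pi^*)=0$. Thus the whole theorem hinges on the equivalence $f(\v^*,\pi^*)=0 \Leftrightarrow \lambda_K'=0$, and the KKT hypothesis enters only to supply feasibility.

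For that equivalence I would work from the product-sum representation \eqref{eq:grad-tech:objective-product-sum}, namely $f(\v,\pi)=\sum_{i=1}^2\sum_{x\in\S}-\pi^i(x,a^i)h_i(x,a^i)$. At a feasible point each factor $\pi^i(x,a^i)$ is non-negative and each $h_i(x,a^i)$ is non-positive, so every summand $-\pi^i(x,a^i)h_i(x,a^i)$ is non-negative; hence $f=0$ if and only if every summand vanishes, i.e. $\pi^i(x,a^i)\,h_i(x,a^i)=0$ for all $i,x,a^i$. The crux is then to read off constraint activity from these products using the explicit form of $\lambda'$ recorded after Proposition~\ref{propos:grad-tech:objective-gradient}: to the value constraint $g_j=h_i(x,a^i)$ the vector $\lambda'$ assigns $\lambda_j'=-\pi^i(x,a^i)$, and to the paired probability constraint $\pi^i(x,a^i)\ge 0$ it assigns $-h_i(x,a^i)$. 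Each summand therefore couples one constraint from \eqref{subeq:opt:2-player:agent1}--\eqref{subeq:opt:2-player:agent2} with one from \eqref{subeq:opt:2-player:agent1prob}--\eqref{subeq:opt:2-player:agent2prob}.

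The key observation I would make precise is that a summand $-\pi^i(x,a^i)h_i(x,a^i)$ is strictly positive exactly when both constraints in its pair are inactive, i.e. $h_i(x,a^i)<0$ and $\pi^i(x,a^i)>0$, and that this is precisely the configuration producing nonzero entries of $\lambda_K'$. For the direction $\lambda_K'=0\Rightarrow f=0$: fix a pair; if its probability constraint is active then $\pi^i(x,a^i)=0$ and the summand is zero, while if it is inactive then $\lambda_K'=0$ forces its $\lambda'$-entry $-h_i(x,a^i)$ to vanish, so $h_i(x,a^i)=0$ and again the summand is zero. For the converse $f=0\Rightarrow\lambda_K'=0$: every inactive value constraint has $h_i(x,a^i)<0$, so the vanishing product gives $\pi^i(x,a^i)=0$ and hence its entry $-\pi^i(x,a^i)$ of $\lambda_K'$ is zero; symmetrically every inactive probability constraint has $\pi^i(x,a^i)>0$, forcing $h_i(x,a^i)=0$ and a zero entry. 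This closes the equivalence, and combined with the first paragraph yields the theorem.

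The main obstacle I anticipate is the bookkeeping rather than any deep estimate: keeping the pairing between the two constraint families straight, fixing sign conventions so that each summand is genuinely non-negative, and checking that the active/inactive case split for a pair maps cleanly onto $\lambda_K'$ versus $\lambda_I'$ (in particular, that an active constraint's contribution always lands in $\lambda_I'$ and so never imposes a spurious condition on $\lambda_K'$). Corollary~\ref{corollary:grad-tech:lambda-equal-lambda'} may be cited to reconcile $\lambda'$ with the genuine KKT multipliers once $\lambda_K'=0$ is established, but it is not required for the core argument.
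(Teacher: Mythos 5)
Your proposal is correct and follows essentially the same route as the paper: reduce the Nash condition to $f(\v^*,\pi^*)=0$ via Theorem~\ref{theorem:opt:fv2}, use the product-sum form \eqref{eq:grad-tech:objective-product-sum} to equate $f=0$ with the vanishing of every summand $\pi^i(x,a^i)\,h_i(x,a^i)$, and translate that into $\lambda_K'=0$ through the pairing of value and probability constraints. The only real divergence is in the ``if'' direction, where the paper passes through Corollary~\ref{corollary:grad-tech:lambda-equal-lambda'} and complementary slackness of the true multipliers $\lambda$, whereas you read off directly that an inactive constraint's $\lambda'$-entry lies in $\lambda_K'$ and hence vanishes by hypothesis --- a slightly more economical argument that, unlike the paper's, does not need assumption~(\ref{asm:grad-tech:active-set-gradients}).
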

\begin{proof} We provide the proof in two parts below.\\
\textit{If part:} From corollary \ref{corollary:grad-tech:lambda-equal-lambda'}, we have $\lambda = -\lambda'$. Let us consider a pair of constraints, $h^i(x, a^i) \le 0$, and $\pi^i(x, a^i) \ge 0$ for some $x \in \S, a^i \in \A^i(x), i = 1,2$. We consider the following cases in each of which we show that $h^i(x, a^i) \pi^i(x, a^i) = 0$ independent of the choice of $x \in \S, a^i \in \A^i(x), i = 1,2$. Thus from \eqref{eq:grad-tech:objective-product-sum}, it would follow that $f(\v^*, \pi^*) = 0$. The result then follows from theorem \ref{theorem:opt:fv2}.
\begin{enumerate}
 \item When $h^i(x, a^i) < 0$ and $\pi^i(x, a^i) = 0$ or $h^i(x, a^i) = 0$ and $\pi^i(x, a^i) > 0$ or $h^i(x, a^i) = 0$ and $\pi^i(x, a^i) = 0$ the result follows.
 \item The case when $h^i(x, a^i) < 0$ and $\pi^i(x, a^i) < 0$ does not occur. We prove this by contradiction. Suppose this case holds. Since $h^i(x, a^i) < 0$, is an inactive constraint, by complementary slackness KKT condition \eqref{subeq:grad-tech:kkt-slack}, we have that the corresponding $\lambda_j = 0 = -\lambda_j' = \pi^i(x, a^i)$. Thus, this case does not occur.
\end{enumerate}
\textit{Only if part:} If a KKT point $\left < \v^*, \pi^* \right >$ corresponds to a Nash equilibrium, then by theorem \ref{theorem:opt:fv2}, we have that $f(\v^*, \pi^*) = 0$. From equation \eqref{eq:grad-tech:objective-product-sum}, we have \[\sum_{i = 1}^2 \sum_{x \in \S} -\pi^i(x, a^i) h_i(x, a^i) = 0.\] Since a KKT point is always a feasible point of the optimization problem \eqref{eq:opt:2-player-opt}, every summand in this equation is non-negative. So, we have 
\begin{equation}
\label{eq:grad-tech:product-zero}
\pi^i(x, a^i) h_i(x, a^i) = 0, \forall x \in \S, a^i \in \A^i(x), i = 1, 2.
\end{equation}
Now from \eqref{eq:grad-tech:product-zero} and the complementary slackness KKT condition \eqref{subeq:grad-tech:kkt-slack}, it is easy to see that $\lambda_K' = 0$.
\end{proof}

\begin{definition}[KKT-N point]
A KKT point $\left < \v^*, \pi^* \right >$ of the optimization problem \eqref{eq:opt:2-player-opt} is defined to be a {\em KKT-N} (KKT-Nash) point, if the matrix \linebreak $G' = G_K^T \left ( I - G_I (G_I^T G_I)^{-1} G_I^T \right ) G_K$, computed at the KKT point, is full rank.
\end{definition}

\begin{lemma}
\label{lemma:grad-tech:sufficient-condition}
Under assumption (\ref{asm:grad-tech:active-set-gradients}) of Section~\ref{sect:grad-tech:herskovits}, a KKT-N point $\left < \v^*, \pi^* \right >$ of the optimization problem \eqref{eq:opt:2-player-opt}, corresponds to a Nash equilibrium of the underlying general-sum discounted stochastic game.
\end{lemma}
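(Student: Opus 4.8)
The plan is to show that the KKT-N condition forces $\lambda_K' = 0$, after which the \emph{if} direction of Theorem~\ref{theorem:grad-tech:lambda-zero-nash} immediately yields that $\langle \v^*, \pi^* \rangle$ corresponds to a Nash equilibrium. So the entire burden is to deduce $\lambda_K' = 0$ from the assumed full rank of $G'$.

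First I would combine the two available expressions for the objective gradient at the KKT point. Proposition~\ref{propos:grad-tech:objective-gradient} gives $\nabla f(\v^*,\pi^*) = G\lambda'$, while the stationarity condition \eqref{subeq:grad-tech:kkt-1} gives $\nabla f(\v^*,\pi^*) = -G\lambda$. Subtracting yields $G(\lambda + \lambda') = 0$. At a KKT point the inactive multipliers vanish by complementary slackness, so $\lambda_K = 0$, and writing $G = \left[\, G_I \;\; G_K \,\right]$ in the active/inactive partition (as in the preceding lemma) the identity becomes $G_I(\lambda_I + \lambda_I') + G_K \lambda_K' = 0$.

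Next I would eliminate the active block by a Schur-complement computation. Left-multiplying by $G_I^T$ and using that $G_I^T G_I$ is invertible (assumption~(\ref{asm:grad-tech:active-set-gradients}) makes the active gradients independent) gives $\lambda_I + \lambda_I' = -(G_I^T G_I)^{-1} G_I^T G_K \lambda_K'$. Substituting this expression back into the same identity after left-multiplying by $G_K^T$ collapses the active contribution and leaves exactly $G_K^T \bigl( I - G_I (G_I^T G_I)^{-1} G_I^T \bigr) G_K\, \lambda_K' = 0$, that is $G' \lambda_K' = 0$ with $G'$ the square matrix appearing in the definition of a KKT-N point. Since a KKT-N point is defined precisely by $G'$ being full rank, $G'$ is invertible, and therefore $\lambda_K' = 0$.

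With $\lambda_K' = 0$ established, I would invoke the \emph{if} direction of Theorem~\ref{theorem:grad-tech:lambda-zero-nash} to conclude that $\langle \v^*, \pi^* \rangle$ is a Nash equilibrium, finishing the proof. I expect the only genuine content to be recognizing that the Schur-complement elimination of the active block reproduces exactly the projected Gram matrix $G'$ from the KKT-N definition; every step after that is algebraically routine, so the main (and mild) obstacle is the bookkeeping of the active/inactive partition and confirming that squareness plus full rank of $G'$ is precisely what forces $\lambda_K' = 0$.
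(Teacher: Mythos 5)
Your proposal is correct and follows essentially the same route as the paper: both derive $G(\lambda+\lambda')=0$ from Proposition~\ref{propos:grad-tech:objective-gradient} and the stationarity condition, use $\lambda_K=0$ and the invertibility of $G_I^TG_I$ to eliminate the active block, arrive at $G'\lambda_K'=0$, and conclude $\lambda_K'=0$ from the full rank of $G'$ before invoking Theorem~\ref{theorem:grad-tech:lambda-zero-nash}. The paper simply routes the algebra through equations \eqref{eq:grad-tech:lambda-condition-1}--\eqref{eq:grad-tech:lambda-condition-2} established in the preceding lemma, which is the same Schur-complement elimination you describe.
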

\begin{proof}
From assumption \eqref{asm:grad-tech:active-set-gradients}, we have that $G_I$ is full-rank and hence $G_I^T G_I$ is invertible. So, \eqref{eq:grad-tech:lambda-condition-1} can be simplified to get
\[ (\lambda_I + \lambda_I') = -\left (G_I^T G_I \right )^{-1} G_I^T G_K (\lambda_K + \lambda_K'). \]
Note that $\lambda_K = 0$ by definition at a KKT point. The above can be substituted in \eqref{eq:grad-tech:lambda-condition-2} and simplified to get
\[G_K^T \left ( I - G_I (G_I^T G_I)^{-1} G_I^T \right ) G_K \lambda_K' = 0.\]
Since at a KKT-N point, the matrix $G_K^T \left ( I - G_I (G_I^T G_I)^{-1} G_I^T \right ) G_K$ is full-rank, we have $\lambda_K' = 0$. Now we have the desired result from theorem \ref{theorem:grad-tech:lambda-zero-nash}.
\end{proof}

Thus, we have a sufficient condition for a KKT point to correspond to a Nash equilibrium. Note that the matrix $G'$ that needs to be of full rank is dependent on \begin{inparaenum}[(i)] \item the reward function and state transition probabilities of the underlying stochastic game, \item the value function and strategy-pair at the current KKT point, and \item the set of active and inactive constraints. \end{inparaenum} These dependencies are highly non-linear and difficult to separate. Using this sufficient condition, we can obtain a weak result on the convergence of gradient-based algorithms to Nash equilibrium solutions, as follows. Here by gradient-based algorithms, we mean those algorithms which assure convergence to a KKT point of a given optimization problem. For instance, the algorithm given in Section \ref{sect:grad-tech:scheme} is one such algorithm.

\begin{theorem}
Under assumption (\ref{asm:grad-tech:active-set-gradients}) of Section~\ref{sect:grad-tech:herskovits}, if every KKT point is also a KKT-N point, then any gradient-based algorithm when applied to the optimization problem \eqref{eq:opt:2-player-opt} would converge to a point corresponding to a Nash equilibrium of the underlying general-sum discounted stochastic game.
\end{theorem}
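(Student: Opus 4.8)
The plan is to assemble the final claim directly from the chain of implications already established, since all of the analytical content has been extracted into Theorem~\ref{theorem:grad-tech:lambda-zero-nash} and Lemma~\ref{lemma:grad-tech:sufficient-condition}. First I would invoke the defining property of a gradient-based algorithm as stipulated in the text: such an algorithm is precisely one that is guaranteed to converge to a KKT point of the optimization problem \eqref{eq:opt:2-player-opt}. For the specific scheme of Section~\ref{sect:grad-tech:scheme}, this is exactly the convergence guarantee carried over verbatim from \cite{twostagefeasible}, as noted in Section~\ref{subsect:conv-kkt}. Hence the limit point $\left < \v^*, \pi^* \right >$ produced by any such algorithm is a KKT point, satisfying conditions \eqref{eq:grad-tech:kkt}.

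Next I would apply the hypothesis of the theorem: every KKT point is also a KKT-N point, so in particular the limit point $\left < \v^*, \pi^* \right >$ is a KKT-N point, i.e.\ the matrix $G' = G_K^T \left ( I - G_I (G_I^T G_I)^{-1} G_I^T \right ) G_K$ evaluated there is full rank. Then I would appeal directly to Lemma~\ref{lemma:grad-tech:sufficient-condition}, which under assumption~\eqref{asm:grad-tech:active-set-gradients} asserts that any KKT-N point corresponds to a Nash equilibrium of the underlying general-sum discounted stochastic game. Composing these three links---convergence to a KKT point, the KKT-to-KKT-N identification granted by the hypothesis, and the KKT-N-to-Nash correspondence of Lemma~\ref{lemma:grad-tech:sufficient-condition}---immediately yields the conclusion that the algorithm converges to a point corresponding to a Nash equilibrium.

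Because each link in this chain is itself an already-proved statement, there is essentially no residual obstacle in the formal argument; the theorem is an immediate corollary of the preceding development, and I would present it as a short three-line composition. The genuine difficulty, which this theorem deliberately sidesteps by placing it into the \emph{hypothesis}, lies in verifying that every KKT point is in fact a KKT-N point. As remarked after Lemma~\ref{lemma:grad-tech:sufficient-condition}, the full-rank condition on $G'$ depends in a highly nonlinear and entangled manner on the reward and transition data of the game, on the value-strategy pair at the KKT point, and on the active/inactive constraint partition, so it cannot be guaranteed \emph{a priori}. For this reason I would be careful to frame the result as a \emph{weak}, conditional convergence guarantee, flagging explicitly that its force rests entirely on an assumption that need not hold for an arbitrary general-sum stochastic game, and that this is precisely the gap explaining why simple gradient schemes may converge to KKT points that are not Nash equilibria.
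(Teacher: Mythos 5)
Your proposal is correct and follows exactly the route the paper intends: the paper states this theorem without a written proof, treating it as an immediate corollary of the definition of a gradient-based algorithm (one guaranteed to converge to a KKT point), the hypothesis that every KKT point is a KKT-N point, and Lemma~\ref{lemma:grad-tech:sufficient-condition}. Your three-link composition, together with the caveat that the real difficulty is hidden in the hypothesis, matches the paper's own framing of this as a weak, conditional result.
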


On the contrary, if a general-sum discounted stochastic game is such that there is at least one KKT point which is not a KKT-N point, then convergence of plain gradient-based algorithms to Nash equilibrium is not assured. %So, in general, we would need algorithms which do not just do gradient descent, but rather use ``some more'' properties of the problem. %In the quest for obtaining such properties, we construct different necessary and sufficient conditions for Nash equilibria in the next chapter. %To illustrate the fact that the optimization problem \eqref{eq:opt:2-player-opt} can 

\section{Conclusion}
\label{sect:conclusion}

We first proposed a simple gradient descent scheme for solution of general-sum stochastic games. During the construction of the scheme, we discussed the overall nature of the indefinite objective and non-convex constraints illustrating the fact that a simple steepest descent algorithm may not even converge to a local minimum of the optimization problem. The proposed scheme takes these issues while constructing both \begin{inparaenum}[(i)] \item feasible search direction as well as, \item optimal step-length. \end{inparaenum} Also, it tries to address numerical efficiency by appropriately using sparsity techniques for an associated matrix inversion. We observed that the size of the optimization problem increases exponentially in the number of variables and the number of constraints. We showed that the proposed scheme converges to a KKT point of the optimization problem. This was seen to be sufficient in simulations performed for the example problem of terrain exploration. However, in general, we showed in Section~\ref{sect:grad-tech:fallacies} that it may not be sufficient for a scheme to converge to any KKT point as the same may not correspond to a Nash equilibrium. The results discussed in Section \ref{sect:grad-tech:fallacies} can be easily generalized to the case where there are more than two players. In summary, usual gradient schemes could possibly suffer from two issues: \begin{inparaenum}[(i)] \item Non-convergence to Nash equilibria which is the more serious of the two issues, and \item scalability to higher problem sizes. \end{inparaenum} In would be interesting to derive gradient-based algorithms that provide guaranteed convergence to Nash equilibria.

\bibliographystyle{plainnat}
\bibliography{references}

\end{document}